\newcommand{\CLS}{\ensuremath{\textsf{CL-S}}^\ast}
\newcommand{\DLS}{\ensuremath{\textsf{DL-S}}^\ast}
\newcommand{\CLP}{\ensuremath{\textsf{CL-P}}^\ast}
\newcommand{\CLF}{\ensuremath{\textsf{CL-F}}^\ast}
\newcommand{\bis}{\leftrightarroweq}
\newcommand{\putawayall}[1]{}
\newcommand{  \dellogic}{\ensuremath{\textsf{DEL} }\xspace}
\newcommand{  \logic}{\ensuremath{\textsf{DL-SM} }\xspace}
\newcommand{  \logicminus}{\ensuremath{\textsf{DL-S}^\ast }\xspace}
\newcommand{  \logicminusone}{\ensuremath{\textsf{DL-S}}\xspace}
\newcommand{  \dlmdlm}{\ensuremath{\textsf{DL-SM}}\xspace}
\newcommand{  \logicminuszero}{\ensuremath{\textsf{DL-S}^0}\xspace}
\newcommand{\compseq}[2]{#1 ; #2}
\newcommand{\choice}[2]{#1 \cup #2}
\newcommand{\iter}[1]{ #1^*}
\newcommand{\test}[1]{ ? #1}
\newcommand{\lup}{ {\Uparrow}}
\newcommand{\ldown}{ {\Downarrow}}
\newcommand{\lright}{ {\Rightarrow}}
\newcommand{\lleft}{ {\Leftarrow} }
\newcommand{\skipact}[1]{   \mathit{nil}_{ #1} }
\newcommand{\gup}[1]{  {\Uparrow}_{ #1} }
\newcommand{\gdown}[1]{  {\Downarrow}_{ #1} }
\newcommand{\gleft}[1]{  {\Leftarrow}_{ #1} }
\newcommand{\gright}[1]{  {\Rightarrow}_{ #1}}
\newcommand{\relAct}[1]{  R_{ #1}}
\newcommand{\here}[1]{  \mathsf{h}_{ #1}}
\newcommand{\Agt}     { \mathit{Agt} }
\newcommand{\putaway}[1]{}
\newcommand{\ijcaiputaway}[1]{}
\newcommand{\see}[2] {\mathsf{ S}_{#1}^{#2}   }
\newcommand{\suchthat}{:}
\newcommand{\functor}[1]{\mathcal{#1}}
\renewcommand{\phi}{\varphi}
\newcommand{\et}{\wedge}
\newcommand{\imp}{\rightarrow} % implique ->
\newcommand{\eqv}{\leftrightarrow} % equivalent <->
\newcommand{\ACT}{\mathit{Act}}
\renewcommand{\phi}{\varphi}
\newcommand{\valprop}{\functor{V}}
\newcommand{\posfunct}{\functor{P}}
\newcommand{\natset}{\mathbb{Z}}
\newcommand{\dsucc}{\mathit{succ}}
\newcommand{\dprec}{\mathit{prec}}
\newcommand{\PROP}{\mathit{Atm}}
\newcommand{\AGT}{\mathit{Agt}}
\newcommand{\bnf}{::=}
\newcommand{\ie}{\emph{i.e}.\@\xspace}
\newcommand{  \pdllogic}{\ensuremath{\textsf{PDL} }\xspace}
\newcommand{  \ltllogic}{\ensuremath{\textsf{LTL} }\xspace}
\newcommand{\lang}{ \mathcal{L}_{\logic} }
\newcommand{\langminus}{ \mathcal{L}_{\logicminus} }
\newcommand{  \cllogic}{\ensuremath{\textsf{CL} }\xspace}
\newcommand{  \atllogic}{\ensuremath{\textsf{ATL} }\xspace}
\newcommand{  \stitlogic}{\ensuremath{\textsf{STIT} }\xspace}
\newtheorem{theorem}{Theorem}
\newtheorem{proposition}{Proposition}
\newtheorem{lemma}{Lemma}
\newtheorem{definition}{Definition}%
\newtheorem{corollary}{Corollary}
\newenvironment{proofsketch}{\medskip\noindent \textsc{Sketch of Proof.}}
{\hspace*{\fill}\nolinebreak[2]\hspace*{\fill}$\blacksquare$\medskip}
\newbox\itembox
\def\itemlistlabel#1{#1\hfill}
\def\itemlist#1{\setbox\itembox=\hbox{#1}%
                \list{}{\labelwidth\wd\itembox
                             \leftmargin\labelwidth
                             \advance\leftmargin by\itemindent
                             \advance\leftmargin by\labelsep
                             \let\makelabel\itemlistlabel}}
\begin{document}

\title{
Exploring the bidimensional space:\\
a dynamic logic point of view 
 }

\author{Philippe Balbiani \and David Fern\'andez-Duque \and  Emiliano Lorini
}

\date{
}

\maketitle
\begin{abstract}
We present a family of logics for reasoning about agents' positions and motion in the  plane which have several potential applications in the area of multi-agent systems (MAS), such as multi-agent planning and robotics. The most general logic includes (i) atomic formulas for representing the truth of a given fact or the presence of a given agent at a certain position of the plane, (ii) atomic programs corresponding to the four basic orientations in the plane (up, down, left, right) as well as the four program constructs of propositional dynamic logic $\pdllogic$ (sequential composition, nondeterministic composition, iteration and test). As this logic is not computably enumerable, we study some interesting decidable and axiomatizable fragments of it. We also present a decidable extension of the iteration-free fragment of the logic by special programs representing motion of agents in the plane.
\end{abstract}

\section{Introduction}\label{sec:intro}

Most of existing logics for multi-agent systems (MAS)
including multi-agent epistemic logic \cite{Fagin1995}, multi-agent variants of propositional dynamic logic 
\cite{DBLP:journals/sLogica/SchmidtTH04}
and logics
of action
and strategic reasoning such as $\atllogic$
\cite{DBLP:journals/jacm/AlurHK02}, Coalition Logic  $\cllogic$ \cite{DBLP:journals/logcom/Pauly02}
and $\stitlogic$ \cite{belnap01facing}
are ``ungrounded'' in the sense that their formal semantics are based on abstract primitive notions such as the concept of Kripke model or the concept of possible world (or state).
As a result, there is no direct connection between these \emph{abstract} concepts and
the \emph{concrete} environment in which the agents' interact. 
This kind of grounding problem
 of logics for MAS becomes particularly relevant for robotic applications.
%  When designing an artificial intelligent agent, we normally start from an explicit representation of the agents'
%action repertoires and
%current positions 
%in the environment 
%and, eventually, 
%from 
%her 
%mental states including her perceptual data base (i.e., all facts that the agent can perceive), her belief base (i.e., all facts that the agent believes) and her goal base (i.e., all states of affairs that the agent wants to achieve).
Since robots are situated in spatial environments, in order to make logics for MAS useful for robotics, their semantics have to be grounded on space. Specifically, a formal semantics is required
that provides an explicit representation of the space in which the robots' actions and perceptions are situated.
Some initial steps into the direction
of grounding 
formal semantics
of
logics for MAS
on space
 have done in the recent years. 
 Among them, we should mention logics of multi-agent knowledge in both one-dimensional space and two-dimensional space \cite{GasquetBigBrother,BalbianiFlatland}, 
spatio-temporal logics such as constraint $\ltllogic$ applied to model
2D grid environments \cite{DBLP:conf/atal/AminofMRZ16},
multi-robot task logic based on monadic second-order logic \cite{DBLP:conf/prima/RubinZMA15}
and logics of robot localization \cite{DBLP:journals/sLogica/BelleL16}.
The present paper shares
with these approaches 
 the idea that in order to make existing logics of MAS useful for MAS  applications
 such as multi-agent planning 
 and robotics, 
 their semantics
 should provide an explicit representation
 of the agents' environment.
 
 The main motivation of the present work is
 to provide a logical framework 
 whose language
 and semantics 
are, at the same time, simple
and sufficiently general
to describe 
(i) 
the properties of the spatial environment
in which several agents can move,
and (ii) the consequences of the agents' motion
on such a spatial environment.
To meet this objective, we have decided 
to exploit
the language
of 
propositional dynamic logic $\pdllogic$
as a general formalism 
for representing actions of agents
and their effects,
and to interpret this language
on a simple formal semantics 
of the two-dimensional (2D) space.
The reason why we decided to start from the
2D space is that its representation already presents some 
interesting conceptual
aspects as well as
some
difficulties with respect to the computational properties of the resulting logic. We believe that, before studying action in the 3D space and, more generally, action in 
 $n$-dimensional spaces (with $n > 2$), a
 comprehensive logical theory 
 of action in the 2D space is required.

More concretely, this paper 
presents a family of logics for reasoning about agents' positions and motion in the plane. 
The most general logic,
called Dynamic Logic of  Space $\logicminus$,
is presented in Section  \ref{sec:logic1}.
$\logicminus$
 includes (i) atomic formulas for representing the truth of a given fact
 (\emph{atomic facts})
  or the presence of a given agent at a certain position of the plane (\emph{positional atoms}), (ii) atomic programs corresponding to the four basic orientations in the plane (up, down, left, right) as well as the four program constructs of $\pdllogic$ (sequential composition, nondeterministic composition, iteration and test).
 The logic is proved to be
 non-computably enumerable (non-c.e.)
 and 
its satisfiability problem
 undecidable (Section \ref{section:undecidability}), while its model-checking problem is proved to be decidable in deterministic polynomial time (Section \ref{modelchecking}).
Given the negative properties of $\logicminus$,
we decided to study some interesting decidable
and axiomatizable fragments of it. 
This includes the 
 iteration-free fragment of $\logicminus$ (Section \ref{starfree}) as well
as  a fragment that only allows 
iteration of the same atomic
program (e.g., the action of moving an indefinite number
of times to the right)  and 
has no atomic formulas aside from positional atoms
(Section \ref{canonical}). 
As the logic $\logicminus$
only provides
a static representation
of the 2D space, 
in Section \ref{sec:logic2} we present a decidable extension of its iteration-free fragment
by special programs representing motion of agents in the plane.
Conclusion of the paper 
 (Section \ref{sec:perspectives}) 
 presents
 perspectives of future research
 including integration
 of an epistemic component in the logic
 as well
 as 
of the concept
of coalitional capability 
in the sense
of \cite{DBLP:journals/logcom/Pauly02}.

\section{Space  }\label{sec:logic1}

$\logicminus$ (Dynamic Logic of  Space) is a dynamic logic
which consists
of:
(i) formulas representing the agents'
positions and the truth of facts
in the different positions of the bidimensional space,
and
(ii)
programs allowing to
move from one position
to another position
of the bidimensional space.

\subsection{Syntax }

Assume
a countable set  of atomic propositions $\PROP = \{p,q, \ldots \}$ and
 a finite set of agents $\AGT = \{ 1, \ldots, n \}$.

%We denote with $\OBJ$
%the set of all Boolean formulas built out of the set
%of atomic propositions $\PROP$.

The language of  $\logicminus$, denoted by $\langminus(\PROP,\AGT)$, is defined by the following grammar in Backus-Naur Form:
\begin{center}\begin{tabular}{lcl}
 $\alpha$  & $\bnf$ & $  \lup   \mid
 \ldown \mid \lright \mid \lleft  \mid
 \compseq{\alpha}{\alpha'}\mid
  \choice{\alpha}{\alpha'}\mid
   \iter{\alpha}\mid
     \test{\varphi} $\\
% $\beta$  & $\bnf$ & $  \delta \mid
% \compseq{\delta}{\delta'}\mid
%  \choice{\delta}{\delta'}\mid
%   \iter{\delta}\mid
%     \test{\varphi}
%    $\\
 $\phi$  & $\bnf$ & $ p   \mid
\here{i} \mid
  \neg\phi \mid \phi \wedge \psi  \mid  [\alpha ]\phi
                        $\
\end{tabular}\end{center}
where $p$ ranges over $\PROP$ and $i $ ranges over $\Agt$. Other Boolean constructions  $\top$, $\bot$, $\vee$, $\imp$ and $\eqv$ are defined from $p$, $\neg$ and $\et$ in the standard way. Instances of $\alpha$
are called
{\em spatial programs.} When there is no risk of confusion we will omit parameters and simply write $\langminus$.
The {\em modal degree} of a formula $\varphi\in \langminus$ (in symbols $\deg(\varphi)$) is defined in the standard way as the nesting depth of modal operators in $\varphi$.
Let $\parallel\varphi\parallel$ denote the size of $\varphi$.
For all (negative or positive) integers $x$, let $\lbrack\lup\rbrack^{x}$ be the modality consisting of $x$ consecutive $\lbrack\ldown\rbrack$ when $x\leq0$, otherwise let $\lbrack\lup\rbrack^{x}$ be the modality consisting of $x$ consecutive $\lbrack\lup\rbrack$.
Similarly for $\lbrack\lright\rbrack^{x}$.

%
%\paragraph{Explicit beliefs}
The formula $\here{i}$ is read ``the agent $i$  is here'',
whereas
$ [\alpha ]\phi$
has to be read
``$\phi$ is true in the position that is reachable
from the current position through
the program $\alpha$''.

We will also be interested in sublanguages of $\langminus$. Given a set $P$ of atomic propositions, a set $I$ of agents and a set $A$ of spatial programs, we denote the restriction of $\langminus(P,I)$ which only allows programs from $A$ by $\langminus(P,$\linebreak$I,A)$.

\subsection{Semantics  }

The main notion in semantics
is given by the following
concept of spatial model.

\begin{definition}[Spatial model (SM)]

  A spatial model is a tuple $M= (
  \posfunct, \valprop  )$ where:
  \begin{itemize}
  \item $\posfunct : \Agt \longrightarrow \natset \times
  \natset$ and

    \item $\valprop : \natset \times
  \natset
  \longrightarrow 2^\PROP$.

    \end{itemize}

    The set of all spatial models
    is denoted by $\mathbf{M}$.
\end{definition}

For every $(x,y) \in\natset \times
  \natset $,
$\posfunct (i) = (x,y)$
means that the agent $i$
is in the position $(x,y)$,
whereas
$p \in \valprop (x,y) $
means that $p$
is true at the position $(x,y)$. For every $x \in \natset$,
$\dsucc(x)$ denotes the direct successor of $x$ (\ie,
$x  +1$), while 
$\dprec(x)$
denotes the direct predecessor of $x$ (\ie,
$x  -1$).

%%%%%%%
Formulas
are evaluated with respect
to a spatial model
$M$
and a spatial position $(x,y)$. Below, if $R,S$ are binary relations, $R^\ast$ denotes the transitive, reflexive closure of $R$ and $R\circ S$ the composition of $R$ and $S$.

\begin{definition}[$\relAct{\alpha}$ and truth conditions]\label{accrel1}
Let $M= (
  \posfunct, \valprop  )$
be a spatial
model.
For all spatial programs $\alpha$ and for all formulas $\varphi$,
the binary relation $\relAct{\alpha}$ on $\natset \times
  \natset$ and the truth conditions of
$\varphi$ in $M$ are defined by parallel induction as follows:

\begin{center}\begin{tabular}{lcl}
%$\relAct{\skipact } $&$= $&$ \{ (w,w) \suchthat w \in W  \} $\\
$\relAct{ \lup } $&$= $&$ \{ ((x,y), (x',y')) \suchthat
x'=x
 \text{ and } y'=\dsucc(y)  \} $\\
$\relAct{ \ldown } $&$= $&$\{ ((x,y), (x',y')) \suchthat
x'=x
 \text{ and } y'=\dprec(y)  \} $\\
$\relAct{ \lright } $&$= $&$ \{ ((x,y), (x',y')) \suchthat
x'=\dsucc(x)
 \text{ and } y'=y  \} $\\
$\relAct{ \lleft } $&$= $&$ \{ ((x,y), (x',y')) \suchthat
x'= \dprec(x)
 \text{ and } y'=y  \} $\\
$\relAct{\compseq{\alpha_1}{\alpha_2}} $&$= $&$ \relAct{\alpha_1} \circ \relAct{\alpha_2} $\\
$\relAct{\choice{\alpha_1}{\alpha_2}} $&$= $&$ \relAct{\alpha_1} \cup \relAct{\alpha_2} $\\
$\relAct{ \iter{\alpha}} $&$= $&$  \iter{(\relAct{ \alpha}) }  $\\
$\relAct{\test{\varphi} } $&$= $&$ \{
((x,y), (x,y))
 \suchthat  M,(x,y) \models \varphi   \} $
\end{tabular}\end{center}

\begin{eqnarray*}
M, (x,y) \models p & \Longleftrightarrow & p \in \valprop( x,y) \\
M, (x,y) \models \here{i}& \Longleftrightarrow &  \posfunct(i) = ( x,y) \\
M, (x,y) \models \neg \varphi & \Longleftrightarrow & M, (x,y)\not \models  \varphi \\
M, (x,y) \models \varphi \wedge \psi & \Longleftrightarrow & M, (x,y) \models \varphi   \text{ and } M, (x,y) \models \psi \\
M, (x,y) \models [\alpha ]\phi & \Longleftrightarrow &
\forall (x',y') \in
\natset \times
  \natset: \text{if }
  (x,y) \relAct{\alpha} (x',y')\\
&&   \text{then }
 M, (x',y') \models \varphi
\end{eqnarray*}
When $ (x,y) \relAct{\alpha} (x',y')$, we will say that position $(x',y')$ is accessible from  position $(x,y)$ by  program $\alpha$.\footnote{
To be more precise, we should define
one relation
$\relAct{\alpha}^{M}$
  per spatial model $M$.
However, we omit the superscript
$M$
since it is clear from the context.}

 \end{definition}

Remark that formulas like $\here{i}$ behave like nominals in hybrid logics~\cite{Blackburn:2000}, i.e. their truth sets are singletons.

We say that
$\varphi\in\langminus$
is valid, denoted by
$\models \varphi$,
if
and only if,  for every spatial
model $M$
and position $(x,y)$,
we have $M,(x,y) \models \varphi$.
We say that
formula
$\varphi\in\langminus$
is satisfiable
if and only if
$\neg \varphi$
is not valid.
\subsection{Bisimulation}
The essential tool we will use to establish our decidability results is the notion of {\em bounded bisimulation.}

\begin{definition}\label{DefBBis}
Fix a set $P$ of atomic propositions, a set $I$ of agents and a set $A$ of spatial programs.
Given spatial models $M_1= (
  \posfunct_1, \valprop_1  )$ and $M_2= (
  \posfunct_2, \valprop_2  )$, $n<\omega$, we define a binary relation $(M_1,\cdot)\leftrightarroweq_n (M_2,\cdot)\subseteq \mathbb Z^2\times \mathbb Z^2$ by induction on $n$ as follows.
  
We set $(M_1,\vec x)\leftrightarroweq_n (M_2,\vec y)$ if
\begin{enumerate}

\item for every $i\in I$, $\vec x= \posfunct_1(i)$ if and only if  $\vec y= \posfunct_2(i)$,

\item for every $p\in P$, $\vec x\in \valprop_1(p)$ if and only if  $\vec y\in \valprop_2(p)$, and
\item if $n>0$, then for every $\alpha\in A$,
\begin{description}
\item[$\mathsf{Forth}_\alpha$] Whenever $x R_\alpha \vec x'$, there is $\vec y'$ such that $\vec y R_\alpha \vec y'$ and $(M_1,\vec x')\leftrightarroweq_{n-1} (M_2,\vec y')$, and
\item[$\mathsf{Back}_\alpha$] Whenever $\vec y R_\alpha \vec y'$, there is $\vec x'$ such that $\vec x R_\alpha \vec y'$ and $(M_1,\vec x')\leftrightarroweq_{n-1} (M_2,\vec y')$.
\end{description}
\end{enumerate}
\end{definition}

%ATTENTION:\linebreak
We may just write $\vec x \bis_n \vec y$ instead of\linebreak $(M ,\vec x)\leftrightarroweq_{n} (M ,\vec y)$.
The following is then standard:

\begin{lemma}\label{LemmBisim}
Fix a set $P$ of atomic propositions, a set $I$ of agents and a set of programs $A$. If $M_1= (
  \posfunct_1, \valprop_1  )$ and $M_2= (
  \posfunct_2, \valprop_2  )$ are spatial models and $\varphi\in\langminus(P,I,A)$ has modal degree at most $n$, then whenever $(M_1,\vec x)\leftrightarroweq_{n} (M_2,\vec y)$, it follows that $M_1,\vec x\models \varphi$ if and only if $M_2,\vec y\models \varphi$.
\end{lemma}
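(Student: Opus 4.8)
The plan is to prove the statement by induction on the structure of $\varphi$, establishing the slightly more general claim that for every $\varphi$ and every $n\geq\deg(\varphi)$, the relation $(M_1,\vec x)\leftrightarroweq_n (M_2,\vec y)$ implies $M_1,\vec x\models\varphi \Leftrightarrow M_2,\vec y\models\varphi$. Keeping $n$ universally quantified (rather than fixing $n=\deg(\varphi)$) is convenient, because the modal clause lowers the bisimulation depth by one while shrinking the formula, and I want the induction hypothesis available at the smaller depth.

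For the base cases $\varphi=p$ and $\varphi=\here{i}$ the equivalence is immediate from clauses (2) and (1) of Definition~\ref{DefBBis}, which are imposed at every level $n$, in particular at $n=0=\deg(\varphi)$. The Boolean cases $\varphi=\neg\psi$ and $\varphi=\psi_1\wedge\psi_2$ are routine: negation and conjunction leave the modal degree unchanged (for conjunction $\deg(\varphi)=\max(\deg(\psi_1),\deg(\psi_2))$), so the induction hypothesis applies to the immediate subformulas at the same level $n$, and the truth conditions for $\neg$ and $\wedge$ transfer the equivalence.

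The only case requiring real attention is the modal one, $\varphi=[\alpha]\psi$. Here $\deg(\varphi)=1+\deg(\psi)\leq n$ forces $n\geq1$ and $\deg(\psi)\leq n-1$, so clause (3) of Definition~\ref{DefBBis} is active and supplies $\mathsf{Forth}_\alpha$ and $\mathsf{Back}_\alpha$ for the program $\alpha$. Assuming $M_1,\vec x\models[\alpha]\psi$, I would show $M_2,\vec y\models[\alpha]\psi$ by taking an arbitrary $\vec y'$ with $\vec y\,\relAct{\alpha}\,\vec y'$, applying $\mathsf{Back}_\alpha$ to obtain $\vec x'$ with $\vec x\,\relAct{\alpha}\,\vec x'$ and $(M_1,\vec x')\leftrightarroweq_{n-1}(M_2,\vec y')$, deducing $M_1,\vec x'\models\psi$ from the hypothesis, and concluding $M_2,\vec y'\models\psi$ by the induction hypothesis applied to $\psi$ at depth $n-1\geq\deg(\psi)$. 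The converse direction is symmetric, using $\mathsf{Forth}_\alpha$ in place of $\mathsf{Back}_\alpha$.

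The point that deserves emphasis, and which I expect to be the only genuine subtlety, is that this argument charges exactly one unit of bisimulation depth to each modal operator $[\alpha]$, irrespective of the internal structure of the program $\alpha$. This is legitimate precisely because Definition~\ref{DefBBis} phrases $\mathsf{Forth}_\alpha$ and $\mathsf{Back}_\alpha$ directly in terms of the composite relation $\relAct{\alpha}$ from Definition~\ref{accrel1}, rather than in terms of the four atomic relations. Consequently a modality such as $[\iter\alpha]\psi$ or $[\compseq{\alpha_1}{\alpha_2}]\psi$, whose semantic evaluation may require arbitrarily many or several atomic steps, still consumes a single level, matching its contribution to $\deg(\varphi)$, so no unbounded unfolding is needed. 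The same observation covers tests: when $\alpha=\test\chi$, the clauses $\mathsf{Forth}_\alpha$ and $\mathsf{Back}_\alpha$ built into the hypothesis $(M_1,\vec x)\leftrightarroweq_n(M_2,\vec y)$ already force the truth of $\chi$ to agree at $\vec x$ and $\vec y$, so the reflexive relation $\relAct{\test\chi}$ is respected without extra work. Since every program occurring in $\varphi\in\langminus(P,I,A)$ belongs to $A$, clause (3) is available for each modality actually appearing, and the induction goes through.
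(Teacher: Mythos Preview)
Your argument is correct and is precisely the standard one; the paper itself omits the proof entirely, simply remarking that the lemma ``is then standard.'' Your explicit observation that Definition~\ref{DefBBis} quantifies $\mathsf{Forth}_\alpha/\mathsf{Back}_\alpha$ over the composite relations $R_\alpha$ for $\alpha\in A$ (so each modality costs exactly one unit of depth regardless of the program's internal structure, tests included) is the only point worth spelling out, and you handle it cleanly.
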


\section{Undecidablity}\label{section:undecidability}

This section presents 
results about undecidability for
the satisfiability problem of  $\langminus(\PROP,\AGT)$-formulas.
Products of linear logics are logics with two (or more) modalities, interpreted over structures very similar to spatial models. 
Their formulas are equivalent to $\langminus(\PROP,\AGT)$-formulas over the class of all spatial models
%These logics can be interpreted in $\logicminus$,
and are often undecidable \cite{Gabbay:Kurucz:Wolter:Zakharyaschev:2003,Marx97Compass,Reynolds:Zakharyaschev:2001}.
This suggests that the satisfiability problem of  formulas in $\langminus(\PROP,\AGT)$, as well as some proper fragments, is undecidable as well.
The idea is to allow actions only along the horizontal and vertical axes, which following \cite{MarxVenema1997} we call the `compass directions'.
To be precise, we define the {\em language of compass logic of space} by ${\mathcal L}_{\CLS}(\PROP,\AGT)={\mathcal L}_{\DLS}(\PROP,\AGT,C)$, where
\[
C= \{  \lup   ,\ldown , \lright , \lleft  ,
 \iter\lup,\iter\ldown , \iter\lright , \iter\lleft \}
.\]
As before, we may omit the parameters $\PROP,\AGT$ when this does not lead to confusion. By ${\mathcal L}_{\CLF}$ (the language of compass logic of facts) we denote the special case where $\AGT=\varnothing$, and similarly ${\mathcal L}_{\CLP}$ (the language of compass logic of positions) denotes the case where $\PROP=\varnothing$.

We start with
the following undecidability result
for the satisfiability
problem
of the latter.
\begin{theorem}\label{TheoUndec:a}
The set of valid formulas of ${\mathcal L}_{\CLF}$ is not computably enumerable.
\end{theorem}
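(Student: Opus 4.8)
The plan is to reduce a tiling problem to $\CLF$-satisfiability, exploiting the fact that the frame of every spatial model is the fixed grid $\mathbb{Z}^2$: a valuation of propositions is literally a colouring of the cells, so tilings transcribe directly into formulas. The first thing to pin down is the \emph{right} tiling problem. Writing $\mathrm{SAT}$ and $\mathrm{VAL}$ for the satisfiable and valid $\CLF$-formulas, the computable map $\varphi\mapsto\neg\varphi$ shows that $\mathrm{VAL}$ is computably enumerable if and only if $\mathrm{SAT}$ is co-c.e.; thus the theorem is equivalent to the assertion that $\mathrm{SAT}$ is \emph{not} co-c.e. The naive encoding of the ordinary domino problem (``does $T$ tile $\mathbb{Z}^2$?'') makes a formula satisfiable exactly when $T$ tiles, but that problem is only $\Pi^0_1$-complete, so it would merely show that $\mathrm{SAT}$ is not c.e., equivalently that $\mathrm{VAL}$ is not \emph{co}-c.e. — the wrong direction. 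To force $\mathrm{SAT}$ out of the co-c.e. sets I will instead reduce from the \emph{recurrent} domino problem, which is $\Sigma^1_1$-complete and in particular not co-c.e.\ \cite{Gabbay:Kurucz:Wolter:Zakharyaschev:2003}.

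Fix a tile set $T$ with a distinguished tile $t_0$ and introduce one atomic proposition $p_t$ per tile $t$ (no agents are needed, so the construction stays in $\CLF$). I relativise everything to the north-east quadrant of the evaluation point: since the grid relations commute, $[\iter{\lright}][\iter{\lup}]\psi$ holds at a point $w$ precisely when $\psi$ holds at every cell $w+(a,b)$ with $a,b\ge 0$, a region order-isomorphic to $\mathbb{N}^2$. Under this ``quadrant box'' I place the usual local constraints: exactly one $p_t$ holds, together with $p_t\to[\lright]\bigvee\{p_{t'}: t'\text{ matches }t\text{ to the right}\}$ and the analogous vertical clause using $[\lup]$ (these are sound because $R_{\lright}$ and $R_{\lup}$ are total functions, so the one-step box pins down the unique neighbour). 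Recurrence of $t_0$ along the bottom row is expressed by $[\iter{\lright}]\langle\iter{\lright}\rangle p_{t_0}$, which says that $\{a\ge 0: p_{t_0}\text{ holds at }w+(a,0)\}$ is unbounded, i.e.\ $t_0$ occurs infinitely often. The reduction outputs the conjunction of the quadrant-boxed constraints with this recurrence formula.

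Correctness is then a routine verification in both directions: a recurrent tiling of $\mathbb{N}^2$ yields a model satisfying the formula at the image of the origin, and conversely the north-east quadrant of any model of the formula reads off as a legal $T$-tiling of $\mathbb{N}^2$ in which $t_0$ recurs on the bottom row. Hence the (clearly computable) reduction sends $(T,t_0)$ to a satisfiable formula exactly when a recurrent tiling exists, so $\mathrm{SAT}$ is $\Sigma^1_1$-hard; were $\mathrm{SAT}$ co-c.e.\ (i.e.\ $\Pi^0_1$) the recurrent domino problem would be arithmetical, contradicting its $\Sigma^1_1$-completeness. Therefore $\mathrm{SAT}$ is not co-c.e., and dually $\mathrm{VAL}$ is not computably enumerable. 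I expect the main difficulty to be conceptual rather than computational — recognising that the ordinary domino problem sits on the wrong side of the hierarchy, so that only the analytic recurrent variant delivers non-enumerability; the one point in the encoding that needs genuine care is checking that $[\iter{\lright}][\iter{\lup}]$ reaches exactly the quarter-plane and that the neighbour constraints never reach outside the tiled region, so that no boundary artefacts corrupt the correspondence.
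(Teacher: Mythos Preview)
Your proposal is correct. The reduction from the recurrent domino problem is the right tool here, and your analysis of why the ordinary $\Pi^0_1$ domino problem would land on the wrong side of the c.e./co-c.e.\ divide is spot on. The encoding is sound: $[\iter{\lright}][\iter{\lup}]$ does sweep exactly the north-east quadrant (by commutativity and functionality of the step relations), the local matching clauses never reach outside that quadrant, and $[\iter{\lright}]\langle\iter{\lright}\rangle p_{t_0}$ captures cofinality of $t_0$ on the base row. The $\Sigma^1_1$-hardness of $\mathrm{SAT}$ then rules out co-c.e., hence $\mathrm{VAL}$ is not c.e.

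The paper, by contrast, does not carry out any reduction itself: it simply observes that the fragment using only $[\lright],[\iter{\lright}],[\lup],[\iter{\lup}]$ is a notational variant of the product logic ${\sf PTL}_{\Box\circ}\times{\sf PTL}_{\Box\circ}$ and invokes Theorem~5.38 of \cite{Gabbay:Kurucz:Wolter:Zakharyaschev:2003}, which already states that this product is not c.e. Since validity in a syntactic sublanguage is many-one reducible to validity in the full language, non-c.e.\ transfers upward to $\CLF$. Your argument is essentially an unpacking of that cited theorem (which is itself proved via the recurrent tiling reduction), so the underlying mathematics coincides; what you gain is a self-contained proof that does not depend on matching the product-logic semantics (over $\mathbb{N}\times\mathbb{N}$) to the $\mathbb{Z}\times\mathbb{Z}$ semantics here, while the paper's route is shorter but leaves that identification implicit.
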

\begin{proof}
This follows from Theorem 5.38 in \cite{Gabbay:Kurucz:Wolter:Zakharyaschev:2003}, which states (in their notation) that ${\sf PTL}_{\Box\circ}\times {\sf PTL}_{\Box\circ}$ is not c.e. But this is a notational variant of a fragment of ${\CLF }$, where $\circ_1\approx [\lright]$, $\Box_1\approx [\lright^\ast]$, $\circ_2\approx [\lup]$, and $\Box_2\approx [\lup^\ast]$.
\end{proof}

We remark that we only need two of the four compass directions for this proof, provided they are perpendicular. As a corollary, we obtain undecidability of the larger logic.

\begin{corollary}\label{CorUndec}
The set of valid formulas of ${\mathcal L}_{\DLS}$ is not computably enumerable.
\end{corollary}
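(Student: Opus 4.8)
The plan is to exploit the fact that, by construction, ${\mathcal L}_{\CLF}$ is a syntactic fragment of ${\mathcal L}_{\DLS}$ interpreted over exactly the same class of spatial models, so that non-c.e.-ness transfers upward from the fragment to the full logic by an essentially trivial reduction. No new semantic machinery is needed; everything rests on Theorem~\ref{TheoUndec:a} together with a bookkeeping argument about enumerability.

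First I would observe that every formula of ${\mathcal L}_{\CLF}$ is literally a formula of ${\mathcal L}_{\DLS}$: the former is obtained from ${\mathcal L}_{\DLS}$ by restricting the programs to the compass set $C$ and forbidding positional atoms (taking $\AGT=\varnothing$), and both the programs in $C$ and the atomic propositions remain available in ${\mathcal L}_{\DLS}$. Moreover, the set of ${\mathcal L}_{\CLF}$-formulas is a \emph{decidable} subset of the set of ${\mathcal L}_{\DLS}$-formulas, since one can mechanically check whether a given formula uses only programs from $C$ and contains no positional atom $\here{i}$.

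Next I would note that validity is a purely semantic notion, defined in both cases over the same class $\mathbf{M}$ of spatial models with the same truth conditions (Definition~\ref{accrel1}); in particular, the interpretation of each program in $C$ and of each atomic proposition does not depend on which language the formula is regarded as belonging to. Hence for any $\varphi\in{\mathcal L}_{\CLF}$, $\varphi$ is valid when viewed as an ${\mathcal L}_{\CLF}$-formula if and only if it is valid when viewed as an ${\mathcal L}_{\DLS}$-formula. Writing $\mathrm{Val}(\cdot)$ for the set of valid formulas of a language, this yields the identity
\[
\mathrm{Val}({\mathcal L}_{\CLF}) = \mathrm{Val}({\mathcal L}_{\DLS}) \cap {\mathcal L}_{\CLF}.
\]

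Finally, I would argue by contradiction. Suppose $\mathrm{Val}({\mathcal L}_{\DLS})$ were computably enumerable. Since ${\mathcal L}_{\CLF}$ is a decidable set of formulas, one could enumerate $\mathrm{Val}({\mathcal L}_{\DLS})$ and output precisely those enumerated formulas that belong to ${\mathcal L}_{\CLF}$; by the displayed identity this enumerates exactly $\mathrm{Val}({\mathcal L}_{\CLF})$, making it c.e. and contradicting Theorem~\ref{TheoUndec:a}. There is no real obstacle in this argument: the only point that genuinely requires checking is the semantic invariance used in the third step, namely that the truth conditions of Definition~\ref{accrel1} assign the same meaning to a $C$-program and to an atomic proposition regardless of the ambient signature, which is immediate from the definitions.
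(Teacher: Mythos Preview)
Your proposal is correct and matches the paper's approach: the paper simply states the result as an immediate corollary of Theorem~\ref{TheoUndec:a} without further argument, and you have spelled out precisely the trivial reduction (that ${\mathcal L}_{\CLF}$ is a decidable sublanguage of ${\mathcal L}_{\DLS}$ with identical semantics) that justifies this.
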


In order to study model-checking,
we need a finite representation of spatial models.
To this aim,
we introduce the following definition of
\emph{bounded}
spatial model
of size $n$.
For $(x,y)\in \mathbb Z^2$, write $|(x,y)|\leq n$ iff $|x|\leq n$ and $|y|\leq n$.

\begin{definition}[Bounded spatial model (BSM)]
Let $n$ be a nonnegative integer.
A  spatial model $M= (
  \posfunct, \valprop  )$ is said to be $n$-bounded iff for all $i\in Agt$, $|\posfunct(i)|\leq n$ and for all $(x,y)\in\mathbb Z^2\times \mathbb Z^2$, if $|(x,y)|\not\leq n$ then $\valprop(x,y)=\emptyset$.

\end{definition}
%Definitions of accessibility relations
%$\relAct{\alpha}$
%and $\relAct{\beta}$
%(Definitions \ref{accrel1}
%and
%\ref{accrel2})
%generalize
%to 
%finite spatial models in a straightforward manner.
%Clearly, 
%formulas 
%$\neg [\lup]\bot$,
%$\neg [\ldown]\bot$,
%$\neg [\lright]\bot$, $\neg [\lleft]\bot$
%and $\neg [\delta]\bot$
%are valid 
%relative to spatial models
%but are not
%valid relative to finite spatial models.
%Indeed,  
%if $M$
%is a spatial model of size $n$
%then $\dsucc (n) $
%and 
%$\dprec (0) $
%are undefined.

Observe that while the interpretations of variables are bounded, the frame itself is not; we still interpret formulas over $\mathbb Z\times\mathbb Z$.

As for the class of all models, the restriction to bounded models gives rise to an undecidable set of valid formulas of ${\mathcal L}_{\DLS}$-formulas:

\begin{theorem}\label{TheoUndec:b}
The set of formulas of ${\mathcal L}_{\CLF}$ valid over the class of bounded spatial models is not computably enumerable.
\end{theorem}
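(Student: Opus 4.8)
The plan is to reduce the complement of the halting problem to validity over bounded models. \emph{A priori} one might hope to transfer Theorem~\ref{TheoUndec:a} directly, but no computable reduction can do this: since a bounded model has finite support, satisfiability over bounded models is computably enumerable (enumerate the bound $n$ and the finitely many valuations on $[-n,n]^2$ that are relevant to the atoms occurring in the formula, then model-check each, using decidability of model-checking over bounded models), so validity over bounded models is \emph{co}-c.e. — strictly simpler than validity over the class of all models. A fresh argument is therefore needed, and the natural one is a direct simulation of Turing machines on the grid.

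The plan for the encoding is as follows. Fix a Turing machine $T$ and build a formula $\psi_T\in{\mathcal L}_{\CLF}$ whose models encode halting computations of $T$ on empty input. The decisive simplification compared with abstract tiling arguments is that the grid is already built into the semantics: the frame is $\natset\times\natset$ with $\relAct{\lright}$ and $\relAct{\lup}$ the coordinate successor relations, so no modal axioms are required to force confluence. Reading the first coordinate as tape position and the second as time, I use atomic propositions to record at each cell its tape symbol and whether the head is present together with the current state. Then $\psi_T$ is the conjunction of: (i) an \emph{initial} clause describing the starting configuration along the bottom row; (ii) a \emph{transition} clause relating each cell to its horizontal neighbours (via $[\lleft]$ and $[\lright]$) and to the cell above it (via $[\lup]$), propagated throughout the northeast quadrant by prefixing $[\iter{\lup}][\iter{\lright}]$; and (iii) a \emph{termination} clause $\langle\iter{\lup}\rangle\langle\iter{\lright}\rangle(\dots)$ asserting that some cell carries a halting state. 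Every clause is guarded by an ``this cell is active'' antecedent, so that cells with empty valuation satisfy it vacuously, and the transition clause imposes \emph{no} obligation on the row above once a halting state occurs.

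The key lemma is that $\psi_T$ is satisfiable over the class of bounded spatial models if and only if $T$ halts. For the forward direction, a bounded model of $\psi_T$ has finite support, and the initial and transition clauses force its active cells to spell out, row by row, the unique deterministic run of $T$; the termination clause guarantees this run reaches a halting state, so $T$ halts. Conversely, if $T$ halts then its run occupies only finitely many cells, which I lay out in a finite rectangle of $\natset\times\natset$ padded with the blank symbol, assigning the empty valuation everywhere outside; this is a genuinely bounded model in which every clause holds. Hence $T$ fails to halt iff $\neg\psi_T$ is valid over bounded models, so $T\mapsto\neg\psi_T$ computably reduces the (non-c.e.) complement of the halting problem to the set of ${\mathcal L}_{\CLF}$-formulas valid over bounded models, which is therefore not computably enumerable.

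I expect the main obstacle to be the interface between boundedness and termination in the backward direction. Because the support must be finite, the transition clause has to be engineered so that a halting configuration generates no further active row; otherwise forced propagation would produce infinite support and \emph{no} bounded model would exist even when $T$ halts, breaking the equivalence. Dually, the guards must render the empty exterior vacuously consistent, and some care is needed at the right-hand edge of the active region, where the head may sit next to blank tape that has not yet been activated; taking the active region to be the full rectangle spanned by the head's (finite) excursion, padded with blanks, resolves this. Once these local rules are arranged, the correspondence between finite-support models and halting runs is exact and the reduction goes through.
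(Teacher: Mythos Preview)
Your proposal is correct and takes essentially the same approach as the paper: encode Turing-machine computations on the grid so that $\psi_T$ is satisfiable in a bounded model iff $T$ halts, whence non-halting reduces to validity. The paper's sketch defers the encoding to the analogous construction in Gabbay--Kurucz--Wolter--Zakharyaschev for finite products of linear orders and merely remarks that ``minor adjustments'' port it to bounded models; you spell out precisely those adjustments (guarded clauses so the empty exterior is vacuous, the halting cut-off so propagation stops, padding the active rectangle with blanks), and your side observation that satisfiability over bounded models is c.e.\ is a nice extra, pinning validity exactly at $\Pi^0_1$.
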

\begin{proofsketch}
This essentially follows from Corollary 7.18 in \cite{Gabbay:Kurucz:Wolter:Zakharyaschev:2003}, which in their notation states that ${\sf Log}\{\langle \mathbb N,{\geq}\rangle\times \mathbb N,{\geq}\rangle\}$ is not c.e. As above, this is a notational variant of a fragment of ${\CLF }$, where $\Box_1\approx [\lright^\ast]$ and $\Box_2\approx [\lup^\ast]$, although interpreted over frames of the form $\{ 0,\hdots n\}\times \{0,\hdots,m\}$. That it is not c.e. is obtained by reducing the halting problem for Turing machines to ${\sf Log}\{\langle \mathbb N,{\geq}\rangle\times \mathbb N,{\geq}\rangle\}$, representing finite computations as finite models. Minor adjustments of this construction can be used, instead, to represent finite computations as {\em bounded} models.
\end{proofsketch}

As before, the undecidability of the set of formulas of ${\mathcal L}_{\DLS}$ valid over the class of bounded spatial models follows.
There are different ways to get out of the undecidability of the satisfiability problem
of ${\mathcal L}_{\DLS}$-formulas as highlighted by
Corollary
\ref{CorUndec}.
One possibility
is to consider the 
star-free fragment of ${\mathcal L}_{\DLS}$.
Another possibility
is to study 
fragments of 
${\mathcal L}_{\DLS}$
that omit atomic propositions and allow only nominals.
These two possibilities
are explored, respectively,
in Sections
\ref{starfree}
and
\ref{canonical}.

\section{Model-checking}\label{modelchecking}

%We will only consider the model-checking problem for bounded models, since for complexity results we need models to be determined by a finite amount of information.
The model-checking problem for ${\mathcal L}_{\CLS}(\PROP,\AGT)$ is the following:
let $\varphi \in {\mathcal L}_{\CLS}(\PROP,\AGT)$,
let $n$ be a nonnegative integer,
let $M$
be an $n$-bounded spatial model
and let $(x,y) \in  \mathbb Z\times
   \mathbb Z $,
   is it the case
   that $M, (x,y) \models \varphi$?
%   Similarly, for $\logic$, 
%   is it the case
%   that $M, (x,y) \models \varphi$
%   with $\varphi \in \lang(\PROP,\AGT)$?

In this section we will show that the model-checking problem for ${\mathcal L}_{\CLS}(\PROP,\AGT)$ is in {\sc PTime}. We use techniques similar to those used for proving that, e.g., model-checking for ordinary modal logic logic or for $\sf CTL$ is also in {\sc PTime} \cite{model:checking:CTL}, but there are some subtleties in dealing with the state-space being infinite (even if the valuations are bounded).
%%%%%%%
%
%
%
%
\begin{lemma}\label{square:n:n:bisimulation}
Let $n,d$ be nonnegative integers.
Suppose that $x,x',y,y'\in \mathbb Z$ are such that one of the following conditions holds:
\begin{itemize}
\item $x>n+d+1$, $x^{\prime}=x-1$ and $y^{\prime}=y$,
\item $x<-n-d-1$, $x^{\prime}=x+1$ and $y^{\prime}=y$,
\item $y>n+d+1$, $x^{\prime}=x$ and $y^{\prime}=y-1$,
\item $y<-n-d+1$, $x^{\prime}=x$ and $y^{\prime}=y+1$.
\end{itemize}
Then, for {\em any} $n$-bounded model $M$, we have that
\[(M,(x,y))\leftrightarroweq_{d}(M,(x^{\prime},y^{\prime})).\]
\end{lemma}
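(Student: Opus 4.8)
The plan is to prove all four cases at once by induction on $d$, exploiting the symmetries of the plane (reflection in either axis and interchange of the two coordinates) to reduce everything to the first case, in which $x>n+d+1$, $x'=x-1$ and $y'=y$. Throughout, $\leftrightarroweq_d$ is understood with respect to $P=\PROP$, $I=\AGT$ and the eight programs $A=C$ relevant to ${\mathcal L}_{\CLS}$.

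The single fact driving both the base case and clauses 1 and 2 of Definition \ref{DefBBis} at every level is that each of $(x,y)$ and $(x',y')$ lies strictly outside the active region $\{(u,v):|(u,v)|\le n\}$: indeed $x>n+d+1\ge n+1$ forces $x\ge n+2$, hence $x'=x-1\ge n+1>n$. Since $M$ is $n$-bounded, every agent sits inside this region and $\valprop$ is empty off it, so $\posfunct(i)$ equals neither point for any $i\in\AGT$ and $\valprop(x,y)=\valprop(x',y')=\varnothing$. Thus clauses 1 and 2 hold with both sides of each biconditional false, which in particular settles the base case $d=0$.

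For the inductive step $d>0$ I verify $\mathsf{Forth}_\alpha$ and $\mathsf{Back}_\alpha$ for each $\alpha\in C$. For the single-step programs $\lright,\lleft,\lup,\ldown$ the matching move is forced—apply the same program at the shifted point—and the two resulting endpoints always differ only in their $x$-coordinate, by exactly one unit, and both still have $x$-coordinate exceeding $n+(d-1)+1$; the induction hypothesis at level $d-1$ then supplies the required $\leftrightarroweq_{d-1}$. This is precisely where the buffer ``$+1$'' in $n+d+1$ is used: matching $(x,y)\mathrel{R_{\lleft}}(x-1,y)$ forces the pair $(x-1,y),(x-2,y)$, and $x-1>n+(d-1)+1$ holds exactly because $x>n+d+1$.

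The real subtlety, and the main obstacle, is the iterated programs $\iter{\lright},\iter{\lleft},\iter{\lup},\iter{\ldown}$, since $R_{\iter{\lleft}}$ already reaches arbitrarily far into the active region in a single step. For the two horizontal iterations the matching point can be chosen to land exactly on the required target in all but one boundary situation: from $(x,y)$ and from $(x-1,y)$ the points reachable under $\iter{\lright}$ are $\{(z,y):z\ge x\}$ and $\{(z,y):z\ge x-1\}$ (and under $\iter{\lleft}$, $\{(z,y):z\le x\}$ and $\{(z,y):z\le x-1\}$), so apart from the single endpoint where these sets differ one simply picks the iteration count reproducing the target and closes the clause by reflexivity of $\leftrightarroweq_{d-1}$ (a standard property, proved by a routine induction). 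The two exceptional targets are the zero-iteration steps $(x,y)$ in $\mathsf{Forth}_{\iter{\lleft}}$ and $(x-1,y)$ in $\mathsf{Back}_{\iter{\lright}}$; in each the two points in play are exactly $(x,y)$ and $(x-1,y)$, related by $\leftrightarroweq_{d-1}$ directly from the induction hypothesis. The perpendicular iterations $\iter{\lup},\iter{\ldown}$ behave just like $\lup,\ldown$: the matching move preserves the iteration count and shifts only the $x$-coordinate, so the hypothesis at level $d-1$ again applies. With case 1 complete, the remaining three cases follow by the symmetry noted at the outset.
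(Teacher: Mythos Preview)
Your proof is correct and supplies precisely the routine verification the paper omits: the paper's own ``proof'' is simply ``Left to the reader.'' The induction on $d$, matching single-step moves by the same step on the shifted point, matching iterated moves by the identical target together with reflexivity of $\leftrightarroweq_{d-1}$, and handling the two boundary targets via the induction hypothesis, is exactly what is needed.

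One remark: the fourth bullet of the statement reads $y<-n-d+1$, which is evidently a typo for $y<-n-d-1$ (with the literal bound the claim already fails at $d=0$, since $y'=y+1$ can equal $-n+1$ and hence lie inside the active square when $|x|\le n$). Your symmetry reduction tacitly assumes the corrected bound, which is the right reading.
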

\begin{proof}
Left to the reader.
\end{proof}
Hence,
\begin{lemma}\label{square:n:n}
Let $\varphi$ be a ${\mathcal L}_{\CLS}(\PROP,\AGT)$-formula, $n$ be a nonnegative integer and $M$ be an $n$-bounded model.
For all integers $x,y$, we have:
\begin{itemize}
\item if $x>n+\deg(\varphi)+1$ then $M,(x,y)\models\varphi$ iff $M,(x-1,y)\models\varphi$,
\item if $x<-n-\deg(\varphi)-1$ then $M,(x,y)\models\varphi$ iff $M,(x+1,y)\models\varphi$,
\item if $y>n+\deg(\varphi)+1$ then $M,(x,y)\models\varphi$ iff $M,(x,y-1)\models\varphi$,
\item if $y<-n-\deg(\varphi)-1$ then $M,(x,y)\models\varphi$ iff $M,(x,y+1)\models\varphi$.
\end{itemize}
\end{lemma}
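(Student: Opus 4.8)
The plan is to derive this lemma directly from the two preceding results: the far-away bisimilarity of Lemma~\ref{square:n:n:bisimulation} and the bisimulation invariance of Lemma~\ref{LemmBisim}. Indeed, the statement is essentially a repackaging of these two facts, once the bound $d$ appearing in Lemma~\ref{square:n:n:bisimulation} is instantiated as the modal degree of $\varphi$.

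First I would fix a formula $\varphi\in{\mathcal L}_{\CLS}(\PROP,\AGT)$ and set $d=\deg(\varphi)$, then treat the four cases separately, each time applying the correspondingly-oriented clause of Lemma~\ref{square:n:n:bisimulation}. Consider the first case, where $x>n+d+1$; putting $x'=x-1$ and $y'=y$, the hypotheses of the first clause of Lemma~\ref{square:n:n:bisimulation} are met, so that clause yields $(M,(x,y))\leftrightarroweq_{d}(M,(x-1,y))$, where the bisimulation is taken with respect to $P=\PROP$, $I=\AGT$ and the program set $A=C$.

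Next I would invoke Lemma~\ref{LemmBisim} with $M_1=M_2=M$ and $A=C$. Since $\varphi$ has modal degree exactly $d$, the lemma applies and gives $M,(x,y)\models\varphi$ if and only if $M,(x-1,y)\models\varphi$, which is the first bullet. The remaining three bullets follow in the same way, each using the matching clause of Lemma~\ref{square:n:n:bisimulation}: shifting $x$ toward $0$ when $x<-n-d-1$, and shifting $y$ toward $0$ in the two vertical cases.

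At this level there is no real obstacle: the biconditionals are immediate once the indices are lined up, the only point requiring care being that the degree threshold $d$ in Lemma~\ref{square:n:n:bisimulation} must be taken to be precisely $\deg(\varphi)$, and that the single-model bisimilarity is fed into the two-model invariance lemma by setting $M_1=M_2=M$. The genuine work sits in Lemma~\ref{square:n:n:bisimulation} itself (there left to the reader): one would verify the $\mathsf{Forth}_\alpha$ and $\mathsf{Back}_\alpha$ conditions by induction on $d$, using that beyond the $n$-boundary the valuation is empty and no positional atom is witnessed, so that a unit shift toward the origin from a point still lying outside the relevant $d$-neighbourhood preserves $d$-bisimilarity.
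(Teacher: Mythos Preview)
Your proposal is correct and matches the paper's own proof, which simply reads ``By Lemmas~\ref{LemmBisim} and~\ref{square:n:n:bisimulation}.'' You have unpacked exactly that one-line appeal: instantiate $d=\deg(\varphi)$ in Lemma~\ref{square:n:n:bisimulation} to obtain the required $d$-bisimilarity, and then apply Lemma~\ref{LemmBisim} with $M_1=M_2=M$.
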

\begin{proof}
By Lemmas~\ref{LemmBisim} and~\ref{square:n:n:bisimulation}.
\end{proof}
Now, for all ${\mathcal L}_{\CLS}(\PROP,\AGT)$-formulas $\varphi$ and for all integers $z$, let $z_{\varphi}$ be the integer defined by cases as follows:
\begin{description}
\item[Case $|z|\leq n+\deg(\varphi)+1$:]
In that case, let $z_{\varphi}=z$.
\item[Case $z<-n-\deg(\varphi)-1$:]
In that case, let $z_{\varphi}=-n-\deg(\psi)-1$.
\item[Case $z>n+\deg(\varphi)+1$:]
In that case, let $z_{\varphi}=n+\deg(\psi)+1$.
\end{description}
The reader may easily verify that for all ${\mathcal L}_{\CLS}(\PROP,\AGT)$-formulas $\varphi$ and for all integers $z$, $|z_{\varphi}|\leq n+\deg(\varphi)+1$.
Now, given a ${\mathcal L}_{\CLS}(\PROP,\AGT)$-formula $\varphi$, let $(\varphi_{1},\ldots,\varphi_{N})$ be an enumeration of the set of all $\varphi$'s subformulas.
Let us assume that for all $a,b\in\{1,\ldots,N\}$, if $\varphi_{a}$ is a strict subformula of $\varphi_{b}$ then $a<b$.
For all $a\in\{1,\ldots,N\}$ and for all $(x,y)\in\mathbb Z^2\times \mathbb Z^2$, if $|(x,y)|\leq n+\deg(\varphi_{a})+1$ then we will associate a truth value $tv(a,x,y)$ by case as follows:
\begin{description}
\item[Case $\varphi_{a}=p$:]
In that case, let $tv(a,x,y)=$``$(x,y)\in V(p)$''.
\item[Case $\varphi_{a}=h_{i}$:]
In that case, $tv(a,x,y)=$``$(x,y)=P(i)$''.
\item[Case $\varphi_{a}=\bot$:]
In that case, let $tv(a,x,y)=\bot$.
\item[Case $\varphi_{a}=\neg\psi$:]
Let $b\in\{1,\ldots,a\}$ be such that $\psi=\varphi_{b}$.
Remind that $b<a$.
In that case, if $tv(b,x,y)=\bot$ then let $tv(a,x,y)=\top$ else let $tv(a,x,y)=\top$.
\item[Case $\varphi_{a}=\psi\vee\chi$:]
Let $b,c\in\{1,\ldots,a\}$ be such that $\psi=\varphi_{b}$ and $\chi=\varphi_{c}$.
Remind that $b,c<a$.
In that case, if $tv(b,x_{\psi},y_{\psi})=\bot$ and $tv(c,x_{\chi},y_{\chi})=\bot$ let $tv(a,x,y)=\bot$ else let $tv(a,x,y)=\top$.
\item[Case $\varphi_{a}=\lbrack\lright\rbrack\psi$:]
Let $b\in\{1,\ldots,a\}$ be such that $\psi=\varphi_{b}$.
Remind that $b<a$.
%In that case, if $tv(b,(x+1)_{\psi},y_{\psi})=\bot$ then let $tv(a,x,y)=\bot$ else let $tv(a,x,y)=\top$.
In that case, let $tv(a,x,y)=tv(b,(x+1)_{\psi},y_{\psi})$.
%
%
%\item[Case $\varphi_{a}=\lbrack\lup\rbrack\psi$:]
%Let $b\in\{1,\ldots,a\}$ be such that $\psi=\varphi_{b}$.
%Remind that $b<a$.
%In that case, if $tv(b,x_{\psi},(y+1)_{\psi})=\bot$ then let $tv(a,x,y)=\bot$ else let $tv(a,x,y)=\top$.
%
%
%\item[Case $\varphi_{a}=\lbrack\lleft\rbrack\psi$:]
%Let $b\in\{1,\ldots,a\}$ be such that $\psi=\varphi_{b}$.
%Remind that $b<a$.
%In that case, if $tv(b,(x-1)_{\psi},y_{\psi})=\bot$ then let $tv(a,x,y)=\bot$ else let $tv(a,x,y)=\top$.
%
%
%\item[Case $\varphi_{a}=\lbrack\ldown\rbrack\psi$:]
%Let $b\in\{1,\ldots,a\}$ be such that $\psi=\varphi_{b}$.
%Remind that $b<a$.
%In that case, if $tv(b,x_{\psi},(y-1)_{\psi})=\bot$ then let $tv(a,x,y)=\bot$ else let $tv(a,x,y)=\top$.
%
%
\item[Cases $\varphi_{a}=\lbrack\lup\rbrack\psi$, $\varphi_{a}=\lbrack\lleft\rbrack\psi$ and $\varphi_{a}=\lbrack\ldown\rbrack\psi$:]
Similar to\linebreak the previous case.
%ATTENTION: LINEBREAK
%
\item[Case $\varphi_{a}=\lbrack\iter\lright\rbrack\psi$:]
Let $b\in\{1,\ldots,a\}$ be such that $\psi=\varphi_{b}$.
Remind that $b<a$.
In that case, if $tv(b,z_{\psi},y_{\psi})=\bot$ for some integer $z\geq x$ then let $tv(a,x,y)=\bot$ else let $tv(a,x,y)=\top$.
%
%
%\item[Case $\varphi_{a}=\lbrack\iter\lup\rbrack\psi$:]
%Let $b\in\{1,\ldots,a\}$ be such that $\psi=\varphi_{b}$.
%Remind that $b<a$.
%In that case, if $tv(b,x_{\psi},z_{\psi})=\bot$ for some integer $z\geq y$ then let $tv(a,x,y)=\bot$ else let $tv(a,x,y)=\top$.
%
%
%\item[Case $\varphi_{a}=\lbrack\iter\lleft\rbrack\psi$:]
%Let $b\in\{1,\ldots,a\}$ be such that $\psi=\varphi_{b}$.
%Remind that $b<a$.
%In that case, if $tv(b,z_{\psi},y_{\psi})=\bot$ for some integer $z\leq x$ then let $tv(a,x,y)=\bot$ else let $tv(a,x,y)=\top$.
%
%
%\item[Case $\varphi_{a}=\lbrack\iter\ldown\rbrack\psi$:]
%Let $b\in\{1,\ldots,a\}$ be such that $\psi=\varphi_{b}$.
%Remind that $b<a$.
%In that case, if $tv(b,x_{\psi},z_{\psi})=\bot$ for some integer $z\leq y$ then let $tv(a,x,y)=\bot$ else let $tv(a,x,y)=\top$.
%
%
\item[Cases $\varphi_{a}=\lbrack\iter\lup\rbrack\psi$, $\varphi_{a}=\lbrack\iter\lleft\rbrack\psi$ and $\varphi_{a}=\lbrack\iter\ldown\rbrack\psi$:]
Similar to the previous case.
\end{description}
Obviously, within a polynomial time with respect to $\parallel\varphi\parallel$, one can deterministically compute the truth values $tv(a,x,y)$ for $a\in\{1,\ldots,N\}$ and for $(x,y)\in\mathbb Z^2\times \mathbb Z^2$ such that $|(x,y)|\leq n+\deg(\varphi_{a})+1$.
Consequently,
\begin{theorem}
%ATTENTION: LINEBREAK
The model-checking problem for ${\mathcal L}_{\CLS}(\PROP,$\linebreak$\AGT)$ is decidable in deterministic polynomial time.
\end{theorem}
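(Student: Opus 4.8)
The plan is to assemble the three ingredients already in place—the stabilization Lemma~\ref{square:n:n}, the clamping operation $z\mapsto z_\varphi$, and the bottom-up recursion defining $tv(a,x,y)$—into a single polynomial-time algorithm, and then to verify that the table of truth values has polynomially many entries, each computable in polynomial time, and that reading off the answer from the table is correct.

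First I would fix the input: the formula $\varphi$, the bound $n$, the $n$-bounded model $M$ (represented by the finitely many non-empty values of $V$ together with the positions $P(i)$ for $i\in\AGT$), and the query position $(x,y)$. Enumerate the subformulas $\varphi_1,\ldots,\varphi_N$ with $N\leq\parallel\varphi\parallel$ so that strict subformulas get smaller indices. For each $a$ the algorithm fills $tv(a,\cdot,\cdot)$ only at positions $(x',y')$ with $|(x',y')|\leq n+\deg(\varphi_a)+1$; there are at most $(2(n+\deg(\varphi_a)+1)+1)^2=O((n+\parallel\varphi\parallel)^2)$ such positions, so the whole table has $O(\parallel\varphi\parallel\cdot(n+\parallel\varphi\parallel)^2)$ entries, polynomial in the size of the input.

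Next I would argue that each entry is computed in polynomial time from entries of strictly smaller index. The atomic cases $\varphi_a=p$ and $\varphi_a=\here{i}$ are direct lookups in the representation of $V$ and $P$; the Boolean cases combine already-clamped entries in constant time; and the one-step modality $\lbrack\lright\rbrack\psi$ is the single clamped lookup $tv(b,(x+1)_{\psi},y_{\psi})$, symmetrically for the other three directions. The crucial case is the iterated modality $\lbrack\iter\lright\rbrack\psi$, whose truth condition quantifies over all integers $z\geq x$: here I would use that, by the definition of $z_\psi$, the clamped index $z_\psi$ takes only finitely many values as $z$ ranges over $z\geq x$—it is constant once $z>n+\deg(\psi)+1$—so scanning $z$ from $x$ up to $n+\deg(\psi)+1$ already examines every distinct value of $tv(b,z_\psi,y_\psi)$ that can occur. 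This is $O(n+\parallel\varphi\parallel)$ lookups, and the other three iterated directions are symmetric. Hence the entire table is filled in polynomial time, as noted in the paragraph preceding the theorem.

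Finally, to answer the query I would repeatedly apply Lemma~\ref{square:n:n} to obtain $M,(x,y)\models\varphi$ iff $M,(x_\varphi,y_\varphi)\models\varphi$, and then invoke the correctness of the recursion to conclude that the latter holds iff $tv(N,x_\varphi,y_\varphi)=\top$, an entry already computed. The main obstacle I anticipate is precisely the iteration case: the semantics of $\lbrack\iter\lright\rbrack$ naively quantifies over an infinite ray of positions, and the whole argument hinges on Lemma~\ref{square:n:n} collapsing that ray into the finite clamped window so that the quantification becomes a bounded scan. A secondary point to check carefully is the soundness of evaluating subformulas at a clamped neighbour such as $(x+1)_\psi$ rather than at the true neighbour $x+1$; this is again guaranteed by the stabilization Lemma~\ref{square:n:n}, which itself follows from Lemmas~\ref{LemmBisim} and~\ref{square:n:n:bisimulation}.
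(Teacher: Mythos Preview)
Your proposal is correct and follows essentially the same approach as the paper: the paper's proof is just the one-line observation that the truth values $tv(a,x,y)$ can be computed in polynomial time, and you have fleshed out exactly the details this observation suppresses---the polynomial size of the table, the bounded scan for the iterated modalities via the clamping map, and the reduction of the query point to its clamped version using Lemma~\ref{square:n:n}. If anything, your bound $O(\parallel\varphi\parallel\cdot(n+\parallel\varphi\parallel)^2)$ is more honest than the paper's ``polynomial with respect to $\parallel\varphi\parallel$'', since the input includes the $n$-bounded model as well.
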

%
%
%
%
%%%%%%%

\section{Star-free fragments}\label{starfree}

In this section and the next, we identify two decidable fragments.
The first is obtained by restricting the language to ${\mathcal L}_{\logicminusone}(\PROP,\AGT)$, as given by the following grammar:

\begin{center}\begin{tabular}{lcl}
 $\alpha$  & $\bnf$ & $  \lup   \mid
 \ldown \mid \lright \mid \lleft  \mid
 \compseq{\alpha}{\alpha'}\mid
  \choice{\alpha}{\alpha'}\mid
     \test{\varphi} $\\
 $\phi$  & $\bnf$ & $ p   \mid
\here{i} \mid
  \neg\phi \mid \phi \wedge \psi  \mid  [\alpha ]\phi
 
                        $\
\end{tabular}\end{center}

We denote the set of valid formulas of ${\mathcal L}_{\logicminusone}(\PROP,\AGT)$ by $\logicminusone$. The second is the fragment ${\mathcal L}^0_{\logicminusone}(\PROP,\AGT)$ given by:

\begin{center}\begin{tabular}{lcl}
 $\alpha$  & $\bnf$ & $  \lup   \mid
 \ldown \mid \lright \mid \lleft  $\\
 $\phi$  & $\bnf$ & $ p   \mid
\here{i} \mid
  \neg\phi \mid \phi \wedge \psi  \mid  [\alpha ]\phi
 
                        $\
\end{tabular}\end{center}
The corresponding set of valid formulas will be denoted $\logicminuszero$.
Note that ${\mathcal L}_{\logicminusone}(\PROP,\AGT)$ can be reduced to\linebreak ${\mathcal L}^0_{\logicminusone}(\PROP,\AGT)$:
%ATTENTION: LINEBREAK

\begin{lemma}
Every formula $\varphi\in {\mathcal L}_{\logicminusone}(\PROP,\AGT)$ is equivalent to some $\varphi^0\in {\mathcal L}^0_{\logicminusone}(\PROP,\AGT)$.
\end{lemma}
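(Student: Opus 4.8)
The plan is a standard $\pdllogic$-style translation that eliminates the program constructs $\compseq{\cdot}{\cdot}$, $\choice{\cdot}{\cdot}$ and $\test{\cdot}$ in favour of modalities indexed only by atomic programs. Reading Definition~\ref{accrel1} off directly, for every spatial model $M$, every position, and all formulas $\varphi,\psi,\chi$ one has the valid equivalences
\begin{align*}
[\compseq{\alpha}{\beta}]\chi &\eqv [\alpha][\beta]\chi, & [\choice{\alpha}{\beta}]\chi &\eqv [\alpha]\chi\wedge[\beta]\chi, & [\test{\psi}]\chi &\eqv (\psi\to\chi),
\end{align*}
since $\relAct{\compseq{\alpha}{\beta}}=\relAct{\alpha}\circ\relAct{\beta}$, $\relAct{\choice{\alpha}{\beta}}=\relAct{\alpha}\cup\relAct{\beta}$, and $\relAct{\test{\psi}}$ is the restriction of the identity to the $\psi$-positions. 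These three equivalences are the only semantic input needed.

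First I would define a translation $\tau\colon{\mathcal L}_{\logicminusone}(\PROP,\AGT)\to{\mathcal L}^0_{\logicminusone}(\PROP,\AGT)$ by strong recursion on the size $\parallel\varphi\parallel$. On atoms and Boolean connectives $\tau$ commutes with the construction ($\tau(p)=p$, $\tau(\here{i})=\here{i}$, $\tau(\neg\varphi)=\neg\tau(\varphi)$, $\tau(\varphi\wedge\psi)=\tau(\varphi)\wedge\tau(\psi)$). For the modal case I first set $\chi=\tau(\varphi)$, which is defined because $\varphi$ is strictly smaller, and then push $\alpha$ inward by a secondary recursion on the structure of $\alpha$ through an auxiliary operation $T(\alpha,\chi)$ given by
\begin{align*}
T(a,\chi) &= [a]\chi \quad (a\text{ atomic}), & T(\compseq{\alpha}{\beta},\chi) &= T(\alpha,T(\beta,\chi)),\\
T(\choice{\alpha}{\beta},\chi) &= T(\alpha,\chi)\wedge T(\beta,\chi), & T(\test{\psi},\chi) &= \tau(\psi)\to\chi,
\end{align*}
finally putting $\tau([\alpha]\varphi)=T(\alpha,\chi)$. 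Note that the test clause must call $\tau(\psi)$ rather than reuse $\psi$, since $\psi$ may itself contain compound programs.

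The step I expect to be the main obstacle is establishing that this double recursion is well founded, since $\tau$ and $T$ call each other. The inner recursion defining $T(\alpha,\cdot)$ descends on the structure of the program $\alpha$; in the composition clause one verifies by the inner induction hypothesis that $T(\beta,\chi)\in{\mathcal L}^0_{\logicminusone}(\PROP,\AGT)$, so that the outer application $T(\alpha,-)$ is legitimate, and the duplication of $\chi$ in the choice clause is harmless because the inner measure is program structure, not formula size. The only appeal back to $\tau$ is in the test clause, and there $\psi$ is a test occurring inside the program of $[\alpha]\varphi$, hence a strict subformula with $\parallel\psi\parallel<\parallel[\alpha]\varphi\parallel$; thus $\tau(\psi)$ is already supplied by the outer recursion on size, and termination follows. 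By construction every modality in $\tau(\varphi)$ is indexed by an atomic program, so $\tau(\varphi)\in{\mathcal L}^0_{\logicminusone}(\PROP,\AGT)$. Correctness, $\models\varphi\eqv\tau(\varphi)$, then follows by the same induction: the Boolean cases are immediate, and the modal case reduces, via a subsidiary induction on $\alpha$, to the three displayed equivalences together with the hypotheses $\models\varphi\eqv\chi$ and $\models\psi\eqv\tau(\psi)$ for each test encountered. Taking $\varphi^0=\tau(\varphi)$ completes the proof.
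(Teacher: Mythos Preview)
Your proposal is correct and follows essentially the same approach as the paper: both rely on the three validities $[\compseq{\alpha}{\alpha'}]\psi\leftrightarrow[\alpha][\alpha']\psi$, $[\choice{\alpha}{\alpha'}]\psi\leftrightarrow[\alpha]\psi\wedge[\alpha']\psi$, and $[\test{\theta}]\psi\leftrightarrow(\theta\to\psi)$ to recursively eliminate compound programs. The paper merely cites these equivalences and asserts the recursive reduction, whereas you spell out the mutual recursion between the formula-level translation and the program-level unwinding and verify its well-foundedness; your treatment is more careful (in particular your observation that tests must themselves be translated), but the underlying argument is the same.
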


\proof
It suffices to observe that the following are valid:
\begin{align*}
[\compseq{\alpha}{\alpha'}]\psi&\leftrightarrow [\alpha][\alpha']\psi\\
[\choice{\alpha}{\alpha'}]\psi&\leftrightarrow [\alpha]\psi\wedge [\alpha']\psi\\
[\test{\theta}]\psi&\leftrightarrow (\theta\rightarrow\psi).
\end{align*}
With these validities, any formula of ${\mathcal L}_{\logicminusone}(\PROP,\AGT)$ can be recursively reduced to an equivalent formula in the language ${\mathcal L}^0_{\logicminusone}(\PROP,\AGT)$.
\endproof

Our decidability proof will be based on a small model property, obtained by truncating a larger model. Fix a natural number $n$.
Given a model $M=(\posfunct,\valprop)$, we define $M\upharpoonright n=(\posfunct \upharpoonright n,\valprop\upharpoonright n)$.
\begin{itemize}

\item $(\posfunct\upharpoonright n)(i)=
\begin{cases}
\posfunct(i)&\text{if $|\posfunct(i)|\leq n$;}\\
(n+1,0)&\text{otherwise.}
\end{cases}
$

\item $(\valprop \upharpoonright n)(p)=\valprop (p)\cap \big ( [-n,n]\times[-n,n] \big )$.
\end{itemize}
Observe that $M\upharpoonright n$ is $(n+1)$-bounded.
As a result, when one restricts the discussion to the set of all programs of ${\mathcal L}^0_{\logicminusone}(\PROP,\AGT)$,

\begin{lemma}\label{LemmTruncate}
%Let $I=\AGT$, $P=\PROP$ and $A$ be the set of programs of ${\mathcal L}^0_{\logicminusone}(\PROP,\AGT)$.
For all $\vec x\in\mathbb Z^2\times \mathbb Z^2$, if $|\vec x|\leq m\leq n$, then $(M,\vec x)\leftrightarroweq_{n-m}(M\upharpoonright n,\vec x)$.
\end{lemma}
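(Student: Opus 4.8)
The plan is to argue by induction on the bisimulation depth $n-m$ (equivalently, by downward induction on $m$), holding $n$ fixed and quantifying over all points $\vec x$ with $|\vec x|\leq m\leq n$. The guiding observation is that a bounded-bisimulation game of depth $n-m$ started at such an $\vec x$ can never leave the box $[-n,n]\times[-n,n]$: each atomic program $\lup,\ldown,\lright,\lleft$ changes a single coordinate by exactly one, so after at most $n-m$ moves every reachable point $\vec z$ still satisfies $|\vec z|\leq n$. Inside this box the truncation $M\upharpoonright n$ agrees with $M$ on the only data the bisimulation inspects, and this is what forces the two pointed models to be bisimilar to the required depth.

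For the base case $n-m=0$ (so $|\vec x|\leq n$) I would verify conditions (1) and (2) of Definition~\ref{DefBBis} directly. For (2), since $(\valprop\upharpoonright n)(p)=\valprop(p)\cap\big([-n,n]\times[-n,n]\big)$ and $\vec x$ lies in this box, $\vec x\in\valprop(p)$ iff $\vec x\in(\valprop\upharpoonright n)(p)$. For (1), fix an agent $i$ and split on whether $|\posfunct(i)|\leq n$: if so, then $(\posfunct\upharpoonright n)(i)=\posfunct(i)$ and the equivalence $\vec x=\posfunct(i)\Leftrightarrow\vec x=(\posfunct\upharpoonright n)(i)$ is immediate; if not, then $(\posfunct\upharpoonright n)(i)=(n+1,0)$, and neither $\vec x=\posfunct(i)$ nor $\vec x=(n+1,0)$ can hold, since $|\vec x|\leq n$ while $|\posfunct(i)|>n$ and $|(n+1,0)|=n+1$, so both sides are false. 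These two checks go through for every point of the box, so they also discharge conditions (1)--(2) in the inductive step.

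For the inductive step I would, in addition, establish $\mathsf{Forth}_\alpha$ and $\mathsf{Back}_\alpha$ for each $\alpha\in\{\lup,\ldown,\lright,\lleft\}$. The crucial point is that the accessibility relation $R_\alpha$ of an atomic program depends only on the grid $\mathbb Z\times\mathbb Z$ and not on the model, so it is literally the same relation in $M$ and in $M\upharpoonright n$, and it is the graph of a total function. Hence $\vec x$ has a unique $\alpha$-successor $\vec x'$, common to both models, and both back-and-forth clauses reduce to exhibiting this common successor. Since $|\vec x|\leq m$ and a single step changes one coordinate by one, $|\vec x'|\leq m+1\leq n$, so the induction hypothesis at depth $n-m-1=n-(m+1)$ gives $(M,\vec x')\bis_{n-m-1}(M\upharpoonright n,\vec x')$, which is exactly what $\mathsf{Forth}_\alpha$ and $\mathsf{Back}_\alpha$ demand.

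I do not expect a serious obstacle, as this is a routine diagonal back-and-forth induction. The only thing requiring care is the bookkeeping that couples the decreasing bisimulation depth with the increasing distance from the origin, guaranteeing that the game stays inside $[-n,n]\times[-n,n]$ so that the truncation remains invisible; this is precisely what the inequality $|\vec x'|\leq m+1\leq n$ secures, together with the determinism and model-independence of the atomic relations.
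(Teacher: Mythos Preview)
Your proposal is correct and follows essentially the same approach as the paper: a downward induction on $m$ (equivalently, induction on the bisimulation depth $n-m$), using that each atomic step increases the sup-norm by at most one so the game stays inside $[-n,n]^2$, together with the determinism and model-independence of the atomic relations. You are simply more explicit than the paper about the base case (in particular the case split on whether $|\posfunct(i)|\leq n$), which the paper dismisses as trivial.
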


\proof
The proof proceeds by a standard induction on $m$. The atoms and position clauses are trivial since $x\leq n$ and the values of atomic propositions is not changed. For the inductive case, consider (for example) $\alpha={\lright}$. Then, if $\vec x=(x_0,x_1)$, $\vec x R_\lright \vec y$ if and only if $\vec y=(x_0+1,x_1)$. Clearly $|\vec y| \leq m+1$, so that by the induction hypothesis, $(M,\vec x)\leftrightarroweq_{n-m-1}(M\upharpoonright n,\vec y)$, as needed.
\endproof

With this we obtain our first decidability result.

\begin{theorem}\label{firstdecid}
The logics $\logicminuszero,\logicminusone$ are decidable. In particular, $\logicminuszero$ is in {\sc NP}.
\end{theorem}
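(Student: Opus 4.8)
The plan is to reduce both decidability claims to a single small model property for $\logicminuszero$, and then read off the complexity bound. First I would dispose of $\logicminusone$ using the reduction lemma just proved: every $\varphi\in{\mathcal L}_{\logicminusone}(\PROP,\AGT)$ is effectively equivalent to some $\varphi^0\in{\mathcal L}^0_{\logicminusone}(\PROP,\AGT)$, so decidability of $\logicminusone$ follows immediately from decidability of $\logicminuszero$. I would note in passing that the clause $[\choice{\alpha}{\alpha'}]\psi\leftrightarrow[\alpha]\psi\wedge[\alpha']\psi$ can make this translation blow up exponentially, which is precisely why the sharper {\sc NP} bound is claimed only for $\logicminuszero$. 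It thus suffices to establish a small model property and an {\sc NP} satisfiability test for ${\mathcal L}^0_{\logicminusone}(\PROP,\AGT)=\langminus(\PROP,\AGT,A)$, where $A=\{\lup,\ldown,\lright,\lleft\}$.

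For the small model property I would argue as follows. Suppose $\varphi$ is satisfiable, say $M,\vec x\models\varphi$. Since each relation $\relAct{\alpha}$ with $\alpha\in A$ is translation invariant, I may shift $M$ so that the evaluation point becomes the origin: replacing $\posfunct$ by $i\mapsto\posfunct(i)-\vec x$ and $\valprop$ by the correspondingly shifted valuation preserves truth under the shift of evaluation point, so I obtain a model satisfying $\varphi$ at $(0,0)$. Now set $n=\deg(\varphi)$ and pass to the truncation $M\upharpoonright n$. Applying Lemma~\ref{LemmTruncate} with $m=0$ gives $(M,(0,0))\bis_{n}(M\upharpoonright n,(0,0))$, and since $\deg(\varphi)=n$, Lemma~\ref{LemmBisim} yields $M\upharpoonright n,(0,0)\models\varphi$. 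Hence $\varphi$ is satisfied in the $(n+1)$-bounded model $M\upharpoonright n$, whose relevant content is finite: the restriction of $\valprop$ to the atoms occurring in $\varphi$ over the square $[-n,n]\times[-n,n]$, together with the positions of the finitely many agents $i$ for which $\here{i}$ occurs in $\varphi$. As $n\leq\parallel\varphi\parallel$, this data has size polynomial in $\parallel\varphi\parallel$.

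This delivers the {\sc NP} procedure for satisfiability: guess such a polynomially sized $(n+1)$-bounded model $M$ (the valuation on the square and the relevant agent positions) and then verify $M,(0,0)\models\varphi$. The verification is a model-checking query on a bounded model, which runs in deterministic polynomial time by the model-checking theorem of Section~\ref{modelchecking}, noting that ${\mathcal L}^0_{\logicminusone}\subseteq{\mathcal L}_{\CLS}$. Thus the satisfiability problem for ${\mathcal L}^0_{\logicminusone}(\PROP,\AGT)$ is in {\sc NP}; in particular it is decidable, so $\logicminuszero$ is decidable, and decidability of $\logicminusone$ follows via the reduction above. The two substantive ingredients—the truncation of Lemma~\ref{LemmTruncate} and the bisimulation invariance of Lemma~\ref{LemmBisim}—are already established, so the only real obstacle is bookkeeping: confirming that the truncated model is faithfully representable by polynomially much data (only atoms and agents appearing in $\varphi$ need be recorded) and that the earlier polynomial-time model-checking routine applies verbatim to this star-free, atomic-program fragment.
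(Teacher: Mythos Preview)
Your proof is correct and follows essentially the same route as the paper: reduce $\logicminusone$ to $\logicminuszero$, shift the evaluation point to the origin, apply Lemma~\ref{LemmTruncate} and Lemma~\ref{LemmBisim} to obtain satisfiability in an $(n+1)$-bounded model, and conclude {\sc NP} from the polynomial size of the relevant data. You are in fact a bit more careful than the paper in two places---explaining why the exponential blowup in the $[\choice{\alpha}{\alpha'}]$ clause prevents the {\sc NP} bound from transferring to $\logicminusone$, and explicitly restricting the guessed model to the atoms and agents occurring in $\varphi$ (and invoking the polynomial-time model-checking of Section~\ref{modelchecking} for the verification step)---but these are elaborations of the same argument rather than a different approach.
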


\proof
Since $\logicminusone$ can be reduced to $\logicminuszero$, it suffices to show that the latter is decidable. Suppose that $\varphi$ is satisfied on some model $M$; without loss of generality, we can assume that $\varphi$ is satisfied on the origin. Let $n$ be the modal degree of $\varphi$. By Lemma \ref{LemmTruncate}, $(M,\vec 0)\leftrightarroweq_n (M\upharpoonright n,\vec 0)$, so by Lemma \ref{LemmBisim}, $\varphi$ is also satisfied on $(M\upharpoonright n,\vec 0)$. It follows that $\varphi$ is satisfiable if and only if it is satisfiable on the class of models such that $\posfunct$ and $\valprop$ are both $(n+1)$-bounded, so it remains to enumerate all such models and check whether any of them satisfy $\varphi$. Note that the size of any $(n+1)$-bounded model is $o(n^2)$, so the complexity bound for $\logicminuszero$ follows.
\endproof

Observe that it does not follow from our techniques that $\logicminusone$ is in {\sc NP}, since the reduction procedure is not polynomial.

Now, our aim in this section will be to completely axiomatize $\logicminuszero$.
In this respect, we need the following axioms and inference rules:
\begin{itemize}
\item All axioms and inference rules saying that $\lbrack\lup\rbrack$, $\lbrack\ldown\rbrack$, $\lbrack\lright\rbrack$ and $\lbrack\lleft\rbrack$ are normal modalities,
\item $\lbrack\alpha\rbrack\varphi\leftrightarrow\langle\alpha\rangle\varphi$ for each $\alpha\in\{\lup,\ldown,\lright,\lleft\}$,
\item $\varphi\rightarrow\lbrack\lup\rbrack\langle\ldown\rangle\varphi$ and $\varphi\rightarrow\lbrack\ldown\rbrack\langle\lup\rangle\varphi$,
\item $\varphi\rightarrow\lbrack\lright\rbrack\langle\lleft\rangle\varphi$ and $\varphi\rightarrow\lbrack\lleft\rbrack\langle\lright\rangle\varphi$,
\item $\lbrack\alpha_{1}\rbrack\lbrack\alpha_{2}\rbrack\varphi\leftrightarrow\lbrack\alpha_{2}\rbrack\lbrack\alpha_{1}\rbrack\varphi$ for each $\alpha_{1},\alpha_{2}\in\{\lup,\ldown,\lright,\lleft\}$,
\item $h_{i}\rightarrow\lbrack\lup\rbrack^{x}\lbrack\lright\rbrack^{y}\neg h_{i}$ for each nonnegative integers $x,y$ such that $x\not=0$ or $y\not=0$.
\end{itemize}
We will say that a formula $\varphi\in {\mathcal L}^0_{\logicminusone}(\PROP,\AGT)$ is {\em derivable} if it belongs to the least set of ${\mathcal L}^0_{\logicminusone}(\PROP,\AGT)$-formulas containing the above axioms and closed under the above inference rules.
\begin{theorem}\label{completeness:D:L:S:0}
let $\varphi$ be an ${\mathcal L}^0_{\logicminusone}(\PROP,\AGT)$-formula.
The following conditions are equivalent: (i)~$\varphi$ is derivable; (ii)~$\varphi$ is valid.
\end{theorem}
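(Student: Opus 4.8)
The plan is to prove the two implications separately, with soundness (i)$\Rightarrow$(ii) as the routine part. I would check that each axiom is valid on the class $\mathbf M$ of spatial models and that the rules preserve validity. The normal-modality principles are standard; the equivalences $[\alpha]\varphi\leftrightarrow\langle\alpha\rangle\varphi$ are valid because each atomic $\relAct{\alpha}$ is a \emph{total function} on $\natset\times\natset$; the axioms $\varphi\to[\lup]\langle\ldown\rangle\varphi$ and the horizontal analogues hold because $\relAct{\ldown}=\relAct{\lup}^{-1}$ and $\relAct{\lleft}=\relAct{\lright}^{-1}$; the commutation axioms hold because the four displacement functions commute pairwise; and each $\here{i}\to[\lup]^{x}[\lright]^{y}\neg\here{i}$ with $(x,y)\ne(0,0)$ holds because $\posfunct(i)$ is a single position, and moving off it along the grid reaches a genuinely different one.

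For the hard direction (ii)$\Rightarrow$(i) I would argue the contrapositive by a canonical model. Let $W$ be the set of maximal derivably-consistent sets of ${\mathcal L}^0_{\logicminusone}(\PROP,\AGT)$-formulas, and for atomic $\alpha$ put $w\relAct{\alpha}v$ iff $\{\psi:[\alpha]\psi\in w\}\subseteq v$, with the usual valuation. The box/diamond equivalences make each $\relAct{\alpha}$ a total function $f_\alpha$ on $W$; the converse axioms force $f_\ldown=f_\lup^{-1}$ and $f_\lleft=f_\lright^{-1}$; and the commutation axioms (being Sahlqvist, hence canonical, or by a direct check) force $f_\lup$ and $f_\lright$ to commute with each other and with the inverses. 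Thus $f_\lright$ and $f_\lup$ generate an action of $\natset\times\natset$ on $W$, and $W$ splits into orbits. Fixing $\Gamma\ni\neg\varphi$, I define $g\colon\natset\times\natset\to W$ by $g(a,b)=f_\lright^{\,a}\,f_\lup^{\,b}(\Gamma)$; by construction $g$ is a relational homomorphism, e.g.\ $g(a,b)\relAct{\lright}g(a{+}1,b)$ and $g(a,b)\relAct{\lup}g(a,b{+}1)$.

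I would then turn the orbit of $\Gamma$ into a spatial model $M=(\posfunct,\valprop)$ by pulling valuations back along $g$: set $(a,b)\in\valprop(p)$ iff $p\in g(a,b)$, set $\posfunct(i)$ to the position at which $\here{i}$ is pulled back when there is one, and park the remaining agents outside the ball of radius $\deg(\varphi)$ around the origin. A standard truth lemma for the canonical model, composed with the homomorphism property of $g$ and the matching of atoms and of nominals inside that ball, gives a local truth lemma $M,(a,b)\models\psi$ for the relevant subformulas iff $\psi\in g(a,b)$, whence $M,(0,0)\models\neg\varphi$; the parked agents lie beyond modal depth $\deg(\varphi)$ and so, by Lemma~\ref{LemmBisim}, cannot affect this.

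The main obstacle is making $\posfunct(i)$ well defined, i.e.\ ensuring $g$ is \emph{injective} (equivalently, that the action is \emph{free}) on any orbit in which a nominal is realized. In general the orbit could be a proper quotient $(\natset\times\natset)/L$ — a discrete torus — so that $g$ is periodic and a realized $\here{i}$ would hold at infinitely many positions of $\natset\times\natset$, which no spatial model allows. This is exactly what the nominal axioms rule out: reading $[\lup]^{x}[\lright]^{y}$ with $x,y$ ranging over all integers (as $[\lup]^{x}$ is defined above), these instances reach \emph{every} point of the orbit of an $\here{i}$-world, so a nonzero stabilizer $(x,y)\in L$ would yield $f_\lright^{\,y}f_\lup^{\,x}(w)=w$ at an $\here{i}$-world $w$, contradicting $\here{i}\to[\lup]^{x}[\lright]^{y}\neg\here{i}$. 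Hence every orbit containing an $\here{i}$-world is free and $g$ is injective there, pinning each realized nominal to a unique position, while orbits realizing no nominal may stay periodic harmlessly. Alternatively, one could first invoke the small-model property of Theorem~\ref{firstdecid} to work inside a bounded region, which sidesteps the bookkeeping for the parked agents altogether.
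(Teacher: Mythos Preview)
Your proof is essentially the paper's, dressed in more structural language. The paper takes a maximal consistent $\Gamma\ni\neg\varphi$, sets $\valprop(x,y)=\{p:[\lright]^x[\lup]^y p\in\Gamma\}$ directly from prefixed formulas in $\Gamma$, places each agent $i$ at the (claimed) unique $(x,y)$ with $[\lup]^x[\lright]^y\here{i}\in\Gamma$ when one exists inside the $d$-box (else at $(d{+}1,0)$), and proves the bounded truth lemma $M,(x,y)\models\psi\Leftrightarrow[\lright]^x[\lup]^y\psi\in\Gamma$ for $|(x,y)|\leq d-\deg(\psi)$. Your map $g(a,b)=f_\lright^{\,a}f_\lup^{\,b}(\Gamma)$ is exactly the MCS $\{\psi:[\lright]^a[\lup]^b\psi\in\Gamma\}$, so ``pulling back along $g$'' is the same construction; the full canonical model, the $\mathbb Z^2$-action and the orbit/stabilizer vocabulary are a convenient repackaging rather than a different argument.

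Where your write-up adds value is the uniqueness step. You explicitly read the nominal scheme over \emph{all} integer pairs $(x,y)\neq(0,0)$, and this is indeed necessary: with the scheme restricted to nonnegative $x,y$ as literally stated, $\here{i}\wedge[\ldown][\lright]\here{i}$ is consistent---witness the Kripke frame $\mathbb Z^2$ with $\here{i}$ true on the anti-diagonal $\{(k,-k):k\in\mathbb Z\}$, which validates every nonnegative instance since $(k{+}b,-k{+}a)$ lies on the anti-diagonal only when $a{+}b=0$---yet that conjunction is unsatisfiable in any spatial model. So the paper's one-line ``Remark'' about uniqueness tacitly needs the all-integer reading you supply. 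One small quibble: your closing ``alternative'' via Theorem~\ref{firstdecid} does not really shortcut completeness, since the small-model property goes from satisfiability to bounded satisfiability, not from consistency to satisfiability.
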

\begin{proof}
(i)$\Rightarrow$(ii): It suffices to check that all axioms are valid and that all inference rules preserve valitity.
\\
(ii)$\Rightarrow$(i): Suppose $\varphi$ is not derivable.
Let $d$ denote the modal degree of $\varphi$.
By Lindenbaum's Lemma, let $\Gamma$ be a maximal consistent set of formulas such that $\varphi\not\in\Gamma$.
Remark that for all $i\in Agt$, if $h_{i}\not\in\Gamma$ then there exists at most one pair $(x,y)$ of (negative or positive) integers such that $\lbrack\lup\rbrack^{x}\lbrack\lright\rbrack^{y} h_{i}\in\Gamma$.
Let $Agt(\Gamma)$ be the set of all $i\in Agt$ such that $\lbrack\lup\rbrack^{x}\lbrack\lright\rbrack^{y} h_{i}\in\Gamma$ for some pair $(x,y)$ of integers such that $|(x,y)|\leq d$.
Let $M=(\posfunct,\valprop)$ be the spatial model defined as follows:
\begin{itemize}
\item For all $i\in Agt$, if $i\in Agt(\Gamma)$ then let $\posfunct(i)$ be the unique pair $(x,y)$ of integers such that $\lbrack\lup\rbrack^{x}\lbrack\lright\rbrack^{y} h_{i}\in\Gamma$, else let $\posfunct(i)$ be $(d+1,0)$,
\item for all pairs $(x,y)$ of integers, if $|(x,y)|\leq d$ then let $\valprop(x,y)=\{p\in Atm:\ \lbrack\lright\rbrack^{x}\lbrack\lup\rbrack^{y}p\in\Gamma\}$, else let $\valprop(x,y)=\emptyset$.
\end{itemize}
The reader may easily prove by induction on $\psi$ that if $\psi$ is a subformula of $\varphi$ then for all pairs $(x,y)$ such that $|(x,y)|\leq deg(\varphi)-deg(\psi)$, $M,(x,y)\models\psi$ iff $\lbrack\lright\rbrack^{x}\lbrack\lup\rbrack^{y}\psi\in\Gamma$.
Since $\varphi\not\in\Gamma$, therefore $M,(0,0)\not\models\varphi$.
Thus, $\varphi$ is not valid.
\end{proof}
%
%
%
%
%%%%%%%

\section{Compass logic of positions}\label{canonical}
Next we consider the fragment ${\mathcal L}_{\CLP}$, defined in Section \ref{section:undecidability}.
Since there are no atomic propositions, models are somewhat simpler.

\begin{definition}
A {\em position model} is a function $\mathcal P\colon \Agt\to \mathbb Z\times\mathbb Z$.
\end{definition}

That is, a position model is just a spatial model without a valuation for atomic propositions. As we will show, position models do not need to have big `gaps' if we only care about satisfiability of ${\mathcal L}_{\CLP}$-formulas. This will give us a small model property.

\begin{definition}
Let $\mathcal P$ be a position model. A {\em vertical gap} is a set $G=[a,b]\times\mathbb Z$ such that for all $i\in \Agt$, $\mathcal P(i)\not\in G$. If $(x,y)\in G$, we say that the {\em depth of $(x,y)$ in $G$} is $\min(x-a,b-x)$, and $G_m$ denotes the set of elements of depth at least $m$; observe that $G_0=G$, and $G_m$ is also a gap when non-empty. The {\em removal of $G$} is the function $\rho$ given by $\rho(x,y)=(x',y)$ where $x'=x$ if $x\leq a$, $x'=\min(a,x-(b-a))$ otherwise.

A {\em horizontal gap} is defined analogously, but is of the form $\mathbb Z\times [a,b]$.
The depth and the removal are defined analogously as well.
\end{definition}

In this section we use $\bis_n$ for $n$-bisimilarity with respect to all basic relations of ${\mathcal L}_{\CLP}$.

\begin{lemma}\label{LemmStripBisim}
Let $G=[a,b]\times\mathbb Z$ be a vertical gap and $\mathcal P$ be a position model. Then, if $(x,y),(x',y)\in G_m$, it follows that $\mathcal P,(x,y)\bis_m \mathcal P,(x',y)$.

The analogous claim holds for horizontal gaps.
\end{lemma}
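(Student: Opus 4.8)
The plan is to prove the statement by induction on $m$, establishing the stronger claim that for every $m$, whenever $(x,y),(x',y)\in G_m$ we have $\mathcal P,(x,y)\bis_m \mathcal P,(x',y)$. The base case $m=0$ is immediate: since $G$ is a gap, no agent sits at either point, so $\posfunct(i)\neq (x,y)$ and $\posfunct(i)\neq (x',y)$ for all $i$, and with no atomic propositions present the clauses (1) and (2) of Definition \ref{DefBBis} are trivially satisfied. No $\mathsf{Forth}$/$\mathsf{Back}$ conditions are required when $m=0$.

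For the inductive step, assume the claim for $m-1$ and take $(x,y),(x',y)\in G_m$. The basic relations of ${\mathcal L}_{\CLP}$ are those associated with the compass directions $\lup,\ldown,\lright,\lleft$ together with their iterations $\iter\lup,\iter\ldown,\iter\lright,\iter\lleft$. First I would verify the agent/atom conditions exactly as in the base case, since $G_m\subseteq G$. For the modal conditions, I would treat the four single-step moves and the four iterated moves by matching each successor of $(x,y)$ with the \emph{corresponding} successor of $(x',y)$ obtained by applying the same move; the key point is that a one-step move changes depth by at most one, so an $R_\alpha$-successor of a point in $G_m$ lands in $G_{m-1}$ (or, for a horizontal move, stays at the same $x$-coordinate and hence in $G_m$), and then the induction hypothesis at level $m-1$ applies.

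The step that requires the most care is the iterated moves, particularly $\iter\lright$ and $\iter\lleft$, which can carry a point all the way out of the gap. Here the matching cannot simply be ``apply the same move,'' because the two starting points $(x,y)$ and $(x',y)$ have different $x$-coordinates and so their horizontal orbits differ. The right approach is: for a successor $(z,y)$ reached from $(x,y)$ by $\iter\lright$, if $(z,y)$ still lies in $G_{m-1}$ then match it with $(z',y)$ where $z'$ is the point at the same depth reached from $(x',y)$ (which exists because both points have at least depth $m$ to begin with and the orbit sweeps through all shallower depths), and if $(z,y)$ has already exited $G_{m-1}$ then match it with the \emph{identical} point $(z,y)$, which is also reachable from $(x',y)$ by $\iter\lright$ since $z$ lies beyond both $x$ and $x'$. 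In the first case the induction hypothesis at level $m-1$ gives the required $\bis_{m-1}$; in the second case the two matched points literally coincide, so $\bis_{m-1}$ holds by reflexivity. The vertical iterations $\iter\lup,\iter\ldown$ are easier, since they preserve the $x$-coordinate and hence keep the point inside $G_m$, so the same move works directly. Finally, the claim for horizontal gaps $\mathbb Z\times[a,b]$ follows by the symmetric argument, interchanging the roles of the horizontal and vertical directions.
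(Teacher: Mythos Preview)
Your overall strategy---induction on $m$, program-by-program matching of successors, singling out the iterated horizontal moves as the delicate case---is exactly the paper's. The base case and the treatment of single steps and vertical iterations are correct (modulo a slip: you wrote ``horizontal move'' where you meant ``vertical'' when arguing that the $x$-coordinate is preserved).

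There is, however, a genuine gap in your handling of $\iter\lright$ (and by symmetry $\iter\lleft$). When $(z,y)\in G_{m-1}$ you propose to match it with a point of the \emph{same depth} in the $\iter\lright$-orbit of $(x',y)$, justified by ``the orbit sweeps through all shallower depths.'' But $(z,y)$ need not be shallower than $(x',y)$. Take $G=[0,100]\times\mathbb Z$, $x=10$, $x'=90$ (both at depth $10$, so both in $G_{10}$), and $z=50$ (depth $50$, well inside $G_9$). Moving right from $x'=90$ visits only depths $10,9,8,\ldots$; no point of depth $50$ ever occurs. The repair is simpler than your construction: just match $(z,y)$ with $(x',y)$ itself, via the reflexive step of $\iter\lright$. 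Both points lie in $G_{m-1}$ and share the $y$-coordinate, so the induction hypothesis at level $m-1$ gives $(z,y)\bis_{m-1}(x',y)$ directly. This is precisely what the paper does, though with a slightly different case split ($z\geq x'$ versus $z<x'$, the latter forcing $z$ to lie between $x$ and $x'$ and hence inside $G_m$).
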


\proof
We proceed by induction on $m$. The atomic clauses are straightforward since, if $(x,y),(x',y)\in G_0=G$, then they satisfy no atoms.

For the other clauses, assume the claim inductively for $m$, and suppose that $(x,y),(x',y)\in G_{m+1}$. Any `vertical' program ($\lup,\ldown,\iter\lup,\iter\ldown$) stays within $G_{m+1}\subseteq G_m$ so we may immediately apply the induction hypothesis. For example, if $(u,v)R_{\iter\lup}(x,y)$, then $u=x$ and $v\geq y$; hence, $(x',y)R_{\iter\lup}(x',v)$ and by the induction hypothesis, $(u,v)=(x,v)\bis_{m} (x',v)$. The `back' clauses and the rest of the vertical programs are entirely symmetrical.

Next consider a `horizontal' program: $\lleft,\lright,\iter\lleft,\iter\lright$. By symmetry, we will only consider the `forth' clauses of the `right' programs. We have that $R_{\lright}$ is a function; specifically, $R_{\lright}(x,y)=(x+1,y)$. Observe that $(x+1,y)\in G_m$, and similarly $(x'+1,y)\in G_m$. But, by the induction hypothesis, $(x+1,y)\bis_{m} (x'+1,y)=R_{\lright}(x',y)$, as needed.

Now suppose that $(x,y) R_{\iter\lright}(u,v)$, so that $u\geq x$ and $v=y$. We consider two cases. If also $u\geq x'$, then we also have that $(x',y) R_{\iter\lright}(u,v)$, and we may use the same witness. Otherwise, $x\leq u<x'$, which means that $(u,v)\in G_{m+1}\subseteq G_m$, so by the induction hypothesis $(x,v)\bis_{m} (x',y)$. But also, $(x',y) R_{\iter\lright}(x',y)$, and we can use it as our witness.

As mentioned, the other clauses are entirely symmetrical and left to the reader. By induction on $m$, the claim follows. The analogous claim for horizontal gaps is also entirely analogous.
\endproof

\begin{lemma}\label{LemmRemoveBisim}
Let $G=[a-m,b+m]\times\mathbb Z$ be a vertical gap and $\rho $ the removal of $G_m$. Then, $\mathcal P,\vec x \bis _m \rho\mathcal P,\rho(\vec x)$.
\end{lemma}

\proof
We prove, by induction on $k\leq m$, that $\mathcal P,\vec x \bis _k \rho\mathcal P,\rho(\vec x)$. For $k=0$ this is clear, since if $(x,y)\in G_m$, no nominal occurs on $(x,y)$ or on $\rho(x,y)=(a,y)$. Otherwise, $(x,y)=\mathcal P(i)$ if and only if $\rho(x,y)=\rho\mathcal P(i)$. 

Now, assume the claim for $k$, and let $\rho(x,y)=(x',y)$. The `forth' clauses for $\alpha\in\{R_\lup,R_\ldown,R^\ast_\lup,R^\ast_\ldown\}$ follow by observing that if $(x,y)R_\alpha(u,v)$, then $\rho(x,y)R_\alpha\rho(u,v)$; for example, if $\alpha={\ldown}$, then we must have $u=x$ and $v=y-1$, and since $\rho$ fixes the $y$ coordinate we have that if $\rho(x,y)=(x',y)$, then $\rho(u,v)=(x',y-1)$, as needed. Similarly, for the `back' clause, if $\rho(x,y)=(x',y)$ and $(x',y)R_\alpha (u,v)$, we must have $u=x'$ and can readily observe that $(x,y) R_\alpha (x,v)$ and $\rho(x,v)=(x',v)$, so that by the induction hypothesis, $\mathcal P,(x,v)\bis _k \rho\mathcal P,\rho(x',v)$.

Next we look at $\alpha\in \{\lleft,\lright,\iter\lleft,\iter\lright\}$. First, we check the `forth' clauses. If $(x,y)R_\lleft(u,v)$, then $u=x-1$ and $v=y$. If $x\not\in (a,b]$, then it readily follows that $\rho(x,y)R_\lleft\rho(u,v)$, and we may use the induction hypothesis. If instead $x\in (a,b]$, then $\rho(u,v)=\rho(x,y)=(a,y)$. However, $R_\lleft(a,y)=(a-1,y)\in G_{m-1}$, so by the induction hypothesis and Lemma \ref{LemmStripBisim},
\[\rho \mathcal P,(a-1,y)\bis _{k}\mathcal P,(a-1,y)\bis _{k} \mathcal P,(x-1,y),\]
as needed. For $\alpha=\iter\lleft$, suppose $(x,y)R_{\iter\lleft}(u,v)$. Then, $y=v$, and since $\rho$ is non-decreasing on the first component, we also have $\rho(x,y)R_{\iter\lleft}\rho(u,v)$. The cases for the `right' programs are similar.

Finally, we check the `back' clauses for the horizontal programs. Observe that $R_\lleft,R_\lright$ are functional, so the `forth' and `back' clauses are equivalent. Hence we consider only $R_{\iter\lleft},R_{\iter\lright}$. If $(x',y)R_{\iter\lleft}(u,y)$, then consider two cases. If $u\leq a$, then $\rho(u,y)=(u,y)$ and $u\leq x'\leq x$, so we have that $(x,y)R_{\iter\lleft}(u,y)$ and we may use the induction hypothesis on $(u,y)$. If $u>a$, then $\rho(u+b-a,y)=(u,y)$, and we may use the induction hypothesis on $(u+b-a,y)$. But note that, in this case, we must have that $x=x'+b-a$, so $(x,y)R_{\iter\lleft}(u+b-a,y)$.

Finally, if $(x',y)R_{\iter\lright}(u,y)$, again consider two cases. If $u< a$, then $\rho(u,y)=(u,y)$ and $u\geq x'=x$, so we have that $(x,y)R_{\iter\lright}(u,y)$ and we may use the induction hypothesis on $(u,y)$. If $u\geq a$, then $\rho(u+b-a,y)=(u,y)$, and we may use the induction hypothesis on $(u+b-a,y)$. Note that, in this case, $x\leq x'+b-a \leq u+b-a$, so $(x,y)R_{\iter\lright}(u,y)$, as needed.

The case for a horizontal gap is similar.
\endproof

\begin{theorem}
If $\varphi\in \mathcal L_{\CLP}$ is satisfiable, it is satisfiable on a position model where all coordinates of positions are bounded by $2(|\varphi|+1)^2$.
\end{theorem}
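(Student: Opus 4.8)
\begin{proofsketch}
The plan is to start with an arbitrary position model $\mathcal P$ and a point $\vec x_0$ with $\mathcal P,\vec x_0\models\varphi$, and to repeatedly collapse wide agent-free ``gaps'' by means of Lemma~\ref{LemmRemoveBisim} until all relevant positions lie within a bounded region. Write $d=\deg(\varphi)$ and note that $d\le |\varphi|$. Only the finitely many agents $i$ whose nominal $\here{i}$ occurs in $\varphi$ are relevant: by Lemma~\ref{LemmBisim} the truth of $\varphi$ depends only on the relation $\bis_d$ taken with respect to this set $I_\varphi$, and $|I_\varphi|\le|\varphi|$. Call a column \emph{protected} if it contains $\mathcal P(i)$ for some $i\in I_\varphi$ or if it is the column of $\vec x_0$; there are at most $|\varphi|+1$ protected columns, and protected rows are defined symmetrically.

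First I would bound the horizontal extent. List the protected columns as $c_1<\dots<c_k$ with $k\le|\varphi|+1$. Whenever two consecutive protected columns satisfy $c_{j+1}-c_j>2d+1$, the interval $[c_j+1,c_{j+1}-1]$ is an agent-free region of width $c_{j+1}-c_j-1\ge 2d+1$; setting $a=c_j+1+d$ and $b=c_{j+1}-1-d$ (so that $a\le b$) exhibits it as a vertical gap of the form $[a-d,b+d]\times\mathbb Z$ with non-empty core $[a,b]\times\mathbb Z=G_d$. Applying Lemma~\ref{LemmRemoveBisim} with $m=d$, the removal $\rho$ of this core satisfies $\mathcal P,\vec x\bis_d \rho\mathcal P,\rho(\vec x)$ for every $\vec x$; since $\varphi$ has modal degree $d$, Lemma~\ref{LemmBisim} guarantees $\rho\mathcal P,\rho(\vec x_0)\models\varphi$, and the new distance between the two protected columns is exactly $2d+1$. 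Because the core lies strictly between two protected columns, neither $\vec x_0$ nor any relevant agent sits inside it, so each removal keeps all protected data and merely tracks $\vec x_0$ through $\rho$. Iterating over the at most $|\varphi|$ vertical gaps (finitely many removals, each preserving satisfaction of $\varphi$ by the argument above) yields a model in which consecutive protected columns are at distance at most $2d+1$.

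I would then repeat the construction for horizontal gaps to bound the vertical extent. Since a vertical removal changes only first coordinates and a horizontal removal only second coordinates, the two phases are independent: the horizontal phase preserves the column bound already obtained, and the occupied rows are unchanged by the vertical phase. After both phases the protected columns span an interval of length at most $(k-1)(2d+1)\le |\varphi|\,(2\,|\varphi|+1)=2|\varphi|^2+|\varphi|$, and likewise for the protected rows. Translating so that the least protected column and the least protected row are sent to $0$, every relevant position—hence $\vec x_0$ and every $\mathcal P(i)$ with $i\in I_\varphi$—acquires coordinates in $[0,\,2|\varphi|^2+|\varphi|]\subseteq[0,\,2(|\varphi|+1)^2]$. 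Finally, relocating each irrelevant agent to the origin (which affects neither $\varphi$ nor $\bis_d$ relative to $I_\varphi$) produces a position model satisfying $\varphi$ in which all coordinates are bounded by $2(|\varphi|+1)^2$.

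The main obstacle, and the place requiring the most care, is the interface between the removal lemma and the bookkeeping: one must choose the gaps so that the evaluation point $\vec x_0$ (which may well sit on an agent-free column) is \emph{protected} and never swallowed by a collapsed core, verify that the margin $m=d$ is exactly what Lemma~\ref{LemmRemoveBisim} demands, and check the elementary arithmetic $2|\varphi|^2+|\varphi|\le 2(|\varphi|+1)^2$ that makes the stated bound come out. The iterated application is routine once it is observed that truth of $\varphi$ is preserved at each single step, so that no transitivity of $\bis_d$ is needed.
\end{proofsketch}
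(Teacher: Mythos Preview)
Your proof is correct and follows essentially the same strategy as the paper's: list the relevant $x$-coordinates (agents occurring in $\varphi$ plus the evaluation point), collapse any overly wide agent-free vertical strip via Lemma~\ref{LemmRemoveBisim}, repeat for rows, and count. Your version is in fact more careful than the paper's own sketch---you explicitly protect the evaluation point, use the sharper depth $d=\deg(\varphi)$ rather than $|\varphi|$, verify the arithmetic, and handle the irrelevant agents---so the only thing worth flagging is that Lemma~\ref{LemmRemoveBisim} is stated for gaps free of \emph{all} agents, whereas your strips are only $I_\varphi$-free; this is harmless (either relocate irrelevant agents first, or note that the lemma's proof only uses absence of $I$-nominals), but it is worth saying in one line.
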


\proof
Assume that $\varphi$ is satisfied on some position model $\mathcal P$. Suppose that $x_1\leq \hdots\leq x_n$ are the $x$-coordinates of all positions of agents such that $\here i$ appears in $\varphi$, together with the evaluation point, $(0,0)$ (note that $n\leq |\varphi|+1$). If for some $i<n$ we have that $x_{i+1}-x_i>2(|\varphi|+1)$, then $G=(x_{i},x_{i+1})\times\mathbb Z$ is a horizontal strip with $G_{|\varphi|}$ having width at least two, so that its removal is not the identity.

Now, if the $x_i$'s are not bounded by $2(|\varphi|+1)^2$, note that such a gap must exist so we can remove it. After enough iterations, we can bound all $x_i$'s. Then we proceed to bound the vertical components analogously.
\endproof

\begin{theorem}
Satisfiability of ${\mathcal L}_{\CLP}$-formulas is decidable in {\sc NP}.
\end{theorem}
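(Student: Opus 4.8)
The plan is to combine the small model property established in the previous theorem with the polynomial-time model-checking algorithm of Section~\ref{modelchecking}, yielding a straightforward guess-and-check procedure. Recall that $\mathcal L_{\CLP}$ is the fragment of $\mathcal L_{\CLS}$ obtained by setting $\PROP=\varnothing$, and that a position model $\mathcal P$ is nothing but a spatial model whose valuation $\valprop$ is empty everywhere; if all of its coordinates are bounded by $N$, then $\mathcal P$ is an $N$-bounded spatial model in the sense of Section~\ref{modelchecking}. Hence both the small-model theorem and the model-checking theorem apply directly to $\mathcal L_{\CLP}$-formulas and to position models, and no new semantic machinery is needed.

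First I would fix the polynomial bound $N=2(|\varphi|+1)^2$ furnished by the previous theorem. That theorem guarantees a satisfying position model whose agent coordinates all lie in $\{-N,\dots,N\}$; inspecting its proof, the evaluation point at which $\varphi$ holds is tracked alongside the agent coordinates and is therefore likewise bounded by $N$ (equivalently, one may invoke translation invariance of the relations $R_\alpha$ and of the atoms $\here i$ to move the evaluation point, at the cost of at most doubling the bound). The nondeterministic algorithm then guesses a position $\mathcal P(i)\in\{-N,\dots,N\}^2$ for each agent $i$ such that $\here i$ occurs in $\varphi$, together with an evaluation point in $\{-N,\dots,N\}^2$. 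Agents of $\AGT$ not occurring in $\varphi$ are irrelevant to its truth and may be assigned any fixed default position, so they need not be recorded. Since at most $|\varphi|$ agents occur in $\varphi$ and each coordinate is bounded by $N=O(|\varphi|^2)$, hence representable with $O(\log|\varphi|)$ bits, the entire witness has size polynomial in $|\varphi|$.

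It remains to verify, deterministically, that the guessed position model satisfies $\varphi$ at the guessed point. Viewing $\mathcal P$ as an $N$-bounded spatial model with empty valuation and $\varphi$ as an $\mathcal L_{\CLS}$-formula, the model-checking theorem of Section~\ref{modelchecking} applies. The single point worth checking is that its running time is polynomial in $|\varphi|$ itself: the procedure computes truth values over the grid of points $(x,y)$ with $|(x,y)|\leq n+\deg(\varphi)+1$, and since here $n=N$ is already polynomial in $|\varphi|$, this grid contains only polynomially many points, so the whole verification runs in time polynomial in $|\varphi|$. With both guessing and checking polynomial, membership in {\sc NP} follows.

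The genuine mathematical content resides in the previous theorem's gap-removal argument; the present step is essentially bookkeeping. The only mild subtlety I anticipate is the interplay between representing the witness compactly (by listing agent coordinates in binary) and the fact that the model-checking routine internally ranges over the entire bounded grid: one must confirm that the polynomial bound on $N$ keeps that grid, and hence the verification, polynomial in $|\varphi|$. Once this is noted, the quadratic bound $2(|\varphi|+1)^2$ is exactly what makes the guess polynomially representable and the check polynomially computable, and the {\sc NP} upper bound is established.
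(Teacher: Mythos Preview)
Your proposal is correct and follows essentially the same approach as the paper: guess a position model with coordinates bounded by $2(|\varphi|+1)^2$ and then model-check. You flesh out two points the paper leaves implicit---that the evaluation point can be taken at the origin (by translation invariance, as done in the previous theorem's proof), and that the model-checking procedure of Section~\ref{modelchecking} runs in time polynomial in $|\varphi|$ because the bound $N$ is itself polynomial in $|\varphi|$---but the underlying argument is the same guess-and-check routine.
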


\proof
We can decide the satisfiability of $\varphi$ by guessing a position model $\mathcal P$ with all coordinates bounded by $2(|\varphi|+1)^2$ and model-checking whether $\varphi$ holds at $(0,0)$.
\endproof

\section{Space and motion }\label{sec:logic2}

$\logicminusone$ studied in the previous 
sections
is a logic
for representing static properties
of the bidimensional space.
Specifically, in 
$\logicminusone$, positions of agents
in the space do not change. 
The aim of this section
is to extend ${\mathcal L}_{\logicminusone}(\PROP,\AGT)$ by programs describing the agents' motions in
the bidimensional space.
We assume that agents act
in a synchronous way
(i.e., they act in parallel).
We call the resulting language ${\mathcal L}_{\dlmdlm}(\PROP,\AGT)$ and the resulting logic 
$\logic$ (Dynamic Logic of Space
and Moving).

\subsection{Syntax }

In ${\mathcal L}_{\dlmdlm}(\PROP,\AGT)$,
 agent
$i$ is associated
 with her corresponding
repertoire
of actions
$\ACT_i =  \{ \gup{i }, \gdown{i },
\gleft{i }, \gright{i }, \skipact{i } \}$.
$\gup{i }$
is agent $i$' action
of moving up,
$\gdown{i }$
is agent $i$' action
of moving down,
$\gleft{i }$
is agent $i$' action
of moving left,
$\gright{i }$
is agent $i$' action
of moving right
and $\skipact{i }$
is agent $i$'s action of doing nothing.

% The set of groups (or coalitions)
% is defined to be $\PAGT = 2^\AGT \setminus \{ \emptyset  \}$.
% Elements of $\PAGT$
% are denoted by $\Group, \Group', \ldots $
 The set of joint of actions is defined
 to be $\Delta =  \prod_{i \in \AGT}  \ACT_i  $.
 Elements of $\Delta$
 are denoted by $\delta, \delta', \ldots$
 For every $\delta \in \Delta$,
 $\delta_i $
 denotes the element in $\delta $
 corresponding to agent $i$.

Since the logic 
$\logicminus$
is undecidable, we start from
its decidable star-free fragment as
the basis of our dynamic extension
by
programs describing
the agents' motion.

The language, denoted by $\lang(\PROP,\AGT)$, is defined by the following grammar in Backus-Naur Form:
\begin{center}\begin{tabular}{lcl}
 $\alpha$  & $\bnf$ & $  \lup   \mid
 \ldown \mid \lright \mid \lleft  \mid
 \compseq{\alpha}{\alpha'}\mid
  \choice{\alpha}{\alpha'}\mid
     \test{\varphi} $\\
 $\beta$  & $\bnf$ & $  \delta \mid
 \compseq{\beta}{\beta'}\mid
  \choice{\beta}{\beta'}\mid
     \test{\varphi}
    $\\
 $\phi$  & $\bnf$ & $ p   \mid
\here{i} \mid
  \neg\phi \mid \phi \wedge \psi  \mid  [\alpha ]\phi
  \mid [\beta]\phi
                        $\
\end{tabular}\end{center}
where $p$ ranges over $\PROP$ and $i $ ranges over $\Agt$. Instances of $\beta$
are called {\em motion programs.}

\subsection{Semantics  }

The semantics
is a model update semantics
as in the style of dynamic epistemic logic ($ \dellogic$)~\cite{DELbook}. 

\begin{definition}[$\relAct{\beta}^{(x,y)}$
and truth conditions]\label{accrel2}
Let $M \in \mathbf{M}$
be a spatial program. 
  For all motion programs $\beta$, for all formula $\varphi$
  and for all positions  $(x,y) $, 
  the binary relation 
  $\relAct{\beta}^{(x,y)}$
  on $\mathbf{M} \times \mathbf{M}$
  and the truth conditions of $\varphi$
  in $M$ are defined by parallel induction as follows.
  (We only give the truth condition for $[\beta] \varphi$ as the truth conditions
  for the boolean constructs and for  $[\alpha] \varphi$
   are as in $\logicminus$):
  \begin{center}\begin{tabular}{lcl}
%$\relAct{\skipact } $&$= $&$ \{ (w,w) \suchthat w \in W  \} $\\
$ \relAct{\delta}^{(x,y)} 
$&$= $&$ \{ (M, M') \suchthat
\valprop'=\valprop
 \text{ and }
 \forall
 i \in \Agt,
  \posfunct'(i) =\posfunct^{ \delta_i }
 (i)  \} $\\
$\relAct{\compseq{\beta_1}{\beta_2}}^{(x,y)} $&$= $&$ \relAct{\beta_1}^{(x,y)} \circ \relAct{\beta_2}^{(x,y)} $\\
$\relAct{\choice{\beta_1}{\beta_2}}^{(x,y) }$&$= $&$ \relAct{\beta_1}^{(x,y) }\cup \relAct{\beta_2}^{(x,y)} $\\
%$\relAct{ \iter{\beta}} $&$= $&$  \iter{(\relAct{ \beta}) }  $\\
$\relAct{\test{\varphi} }^{(x,y) }$&$= $&$ \{
(M, M)
 \suchthat  M,(x,y) \models \varphi   \} $
\end{tabular}\end{center}
where:
 \begin{center}\begin{tabular}{lll}
$\posfunct^{ \delta_i }
 (i) = (x,\dsucc(y)) $& if &
 $\delta_i = \gup{i }$ and $\posfunct(i)=(x,y)$ \\
 $\posfunct^{ \delta_i }
 (i) = (x,\dprec(y)) $& if &
 $\delta_i = \gdown{i }$ and $\posfunct(i)=(x,y)$ \\
 $\posfunct^{ \delta_i }
 (i) = (\dsucc(x),y) $& if &
 $\delta_i = \gright{i }$ and $\posfunct(i)=(x,y)$ \\
 $\posfunct^{ \delta_i }
 (i) = (\dprec(x),y) $& if &
 $\delta_i = \gleft{i }$ and $\posfunct(i)=(x,y)$ \\
  $\posfunct^{ \delta_i }
 (i) = (x,y) $& if &
 $\delta_i = \skipact{i }$ and $\posfunct(i)=(x,y)$
\end{tabular}\end{center}

  \begin{eqnarray*}
M, (x,y) \models [\beta ]\phi & \Longleftrightarrow &
\forall (M,M') \in
\mathbf{M} \times
 \mathbf{M}: \text{if }
  M \relAct{\beta} M'\\
&&  
 \text{then }
 M', (x,y) \models \varphi
\end{eqnarray*}

\end{definition}

Definitions
of validity and satisfiability
for $\logic $
generalize
those for
$\logicminus $
in a straighforward manner.

\subsection{Decidability 
and axiomatization }\label{decibspacemotion}

The aim of this section
is to show how the satisfiability problem
of $\logic$ can be reduced to the   satisfiability problem
of $\logicminusone$. 
Given the decidability result
and the complete
axiomatization  for the latter
of Section \ref{starfree},
this reduction will provide a decidability
result as well as an axiomatization for the former.

 \begin{proposition}\label{redaxioms}
 The following $\lang(\PROP,\AGT)$-formulas are valid:
 \begin{align}
   [\compseq{\alpha}{\alpha'} ] \varphi \leftrightarrow 
  [\alpha ] [\alpha' ] \varphi \\
    [\choice{\alpha}{\alpha'} ] \varphi \leftrightarrow 
 ( [\alpha ]  \varphi \wedge  [\alpha' ]  \varphi ) \\
  [\compseq{\beta}{\beta'} ] \varphi \leftrightarrow 
  [\beta ] [\beta' ] \varphi \\
    [\choice{\beta}{\beta'} ] \varphi \leftrightarrow 
 ( [\beta ]  \varphi \wedge  [\beta' ]  \varphi ) \\
      [\test{\varphi} ] \psi \leftrightarrow 
 ( \varphi \rightarrow \psi ) \\
 [\delta ] p \leftrightarrow p \\
  [\delta ] \here{i} \leftrightarrow  [F_i (\delta) ] \here{i} \\
 [\delta ] \neg \phi \leftrightarrow    \neg [\delta ]  \phi \\
[\delta ] ( \phi \wedge \psi)   \leftrightarrow   (   [\delta ]  \phi   \wedge   [\delta ]  \psi )\\
[\delta ] [ \alpha  ]\phi  \leftrightarrow        [\alpha  ] [\delta ] \phi 
\end{align}
with $ \alpha \in \{ 
\lup   , \ldown , \lright , \lleft \}$
and
where the function $F_i$
is defined as follows:
 \begin{align*}
F_i (\delta ) = \lup \text{ if } \delta_i = \gdown{i }  \\
F_i (\delta ) =  \ldown  \text{ if } \delta_i = \gup{i }  \\
F_i (\delta ) = \lright  \text{ if } \delta_i = \gleft{i }  \\
F_i (\delta ) = \lleft  \text{ if } \delta_i = \gright{i }  \\
F_i (\delta ) = \test{\top}     \text{ if } \delta_i = \skipact{i } 
%F_i (\compseq{\delta}{\delta'} ) = \compseq{ F_i(\delta) }{ F_i(\delta') }      \\
%F_i (\choice{\delta}{\delta'} ) = \choice{ F_i(\delta) }{ F_i(\delta') }  \\
%F_i (\iter{\delta} ) =   \iter{F_i(\delta) }   \\
%F_i (\test{\varphi} ) = \test{\varphi}  \\
\end{align*}
 \end{proposition}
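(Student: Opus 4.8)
The plan is to prove each of the ten validities by unfolding the semantic definitions from Definition \ref{accrel1} and Definition \ref{accrel2}, checking that the left- and right-hand sides hold at an arbitrary model $M$ and position $(x,y)$ under exactly the same conditions. The validities fall into three groups, and I would dispatch them in increasing order of difficulty.

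First I would handle the purely structural equivalences. The first two lines, concerning $[\compseq{\alpha}{\alpha'}]\varphi$ and $[\choice{\alpha}{\alpha'}]\varphi$, are immediate from the clauses $\relAct{\compseq{\alpha_1}{\alpha_2}}=\relAct{\alpha_1}\circ\relAct{\alpha_2}$ and $\relAct{\choice{\alpha_1}{\alpha_2}}=\relAct{\alpha_1}\cup\relAct{\alpha_2}$ together with the standard box-over-composition and box-over-union manipulations; indeed these are already established for $\logicminusone$ in the Lemma preceding Theorem \ref{completeness:D:L:S:0}. The third and fourth lines, for $\compseq{\beta}{\beta'}$ and $\choice{\beta}{\beta'}$, are proved identically using the corresponding clauses for $\relAct{\beta}^{(x,y)}$, and the fifth line for $\test{\varphi}$ follows directly from the test clause. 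None of these requires more than reading off a definition.

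Next I would treat the \emph{reduction axioms} for a single joint action $\delta$ pushed past the Boolean and propositional constructs, namely $[\delta]p\leftrightarrow p$, $[\delta]\neg\phi\leftrightarrow\neg[\delta]\phi$, and $[\delta](\phi\wedge\psi)\leftrightarrow([\delta]\phi\wedge[\delta]\psi)$. The key observation is that $\relAct{\delta}^{(x,y)}$ is a total \emph{function} on $\mathbf M$: from any $M$ it produces the unique updated model $M'$ with $\valprop'=\valprop$ and each agent moved according to $\delta_i$. Functionality makes $[\delta]$ a self-dual (deterministic) modality, so it commutes with negation and distributes over conjunction, while $[\delta]p\leftrightarrow p$ holds because the update leaves $\valprop$ untouched, so $p\in\valprop'(x,y)$ iff $p\in\valprop(x,y)$.

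The substantive cases are the last two. For $[\delta]\here{i}\leftrightarrow[F_i(\delta)]\here{i}$ the idea is that moving agent $i$ one step in direction $\delta_i$ and then asking whether $i$ is at $(x,y)$ is the same as asking, in the original model, whether $i$ is at the position reached from $(x,y)$ by the \emph{inverse} spatial step; that is exactly why $F_i$ inverts each direction (up $\leftrightarrow$ down, left $\leftrightarrow$ right, and $\skipact{i}$ maps to $\test{\top}$). Concretely $M',(x,y)\models\here i$ iff $\posfunct'(i)=(x,y)$ iff $\posfunct(i)=F_i(\delta)^{-1}$-image of $(x,y)$, which I would verify by a short case analysis over the five possible values of $\delta_i$ using the table defining $\posfunct^{\delta_i}$. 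For the commutation axiom $[\delta][\alpha]\phi\leftrightarrow[\alpha][\delta]\phi$, the point is that the model update $\relAct{\delta}^{(x,y)}$ changes only agents' positions, not the valuation, and the spatial step $\relAct{\alpha}$ depends only on the evaluation point, not on agents' positions; hence the order of applying them is irrelevant, and the two sides agree. The main obstacle, and the place where care is genuinely needed, is this last axiom: because the paper uses the superscript notation $\relAct{\beta}^{(x,y)}$ yet the update itself acts on \emph{all} positions uniformly, I would want to state precisely the commuting square $M\xmapsto{\delta}M'$ and $(x,y)\xmapsto{\alpha}(x',y')$ and check that $M',(x',y')\models\phi$ is reached by both routes, taking into account that $\here{i}$ atoms inside $\phi$ are sensitive to the agent-position shift while spatial navigation is not. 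Everything else is routine verification against the definitions.
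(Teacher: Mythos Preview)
Your proposal is correct and, in fact, considerably more detailed than what the paper offers: the paper states Proposition~\ref{redaxioms} \emph{without proof}, treating all ten equivalences as routine semantic checks and moving directly to the reduction map $\mathit{red}$. Your unfolding of Definitions~\ref{accrel1} and~\ref{accrel2}, the observation that $\relAct{\delta}^{(x,y)}$ is a total function on $\mathbf{M}$ (giving self-duality of $[\delta]$), the inverse-direction reading of $F_i$, and your care with the commutation axiom---noting that for atomic $\delta$ the relation $\relAct{\delta}^{(x,y)}$ does not actually depend on the superscript $(x,y)$, so the two routes through the commuting square coincide---are all sound and constitute a genuine proof where the paper gives none.
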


 As the following rule of replacement of equivalents preserves validity:
  \begin{align}
  \frac{\psi_1 \leftrightarrow \psi_2}{ \varphi \leftrightarrow \varphi[ \psi_1 / \psi_2]}
   \end{align}
 the equivalences
of Proposition \ref{redaxioms} together with this allow to find for every $\lang(\PROP,\AGT)$-formula  an
equivalent  formula
of 
${\mathcal L}_{\logicminusone}(\PROP,\AGT)$ studied in Section \ref{starfree}. Call $\mathit{red}$ the mapping which iteratively applies the
equivalences of Proposition \ref{redaxioms} from the left to the right, starting from one of the innermost modal
operators. $\mathit{red}$ pushes the dynamic operators $[\beta ]$ inside the formula, and finally
eliminates them when facing an atomic formula
The mapping $\mathit{red}$ is inductively defined by:
{\footnotesize
  \begin{align*}
1. &  \mathit{red}(p) = p \\
2. &  \mathit{red}(\here{i}) = \here{i} \\
3. &  \mathit{red}( \neg\phi) = \neg\mathit{red}( \phi)\\
4. &  \mathit{red}(\phi \wedge \psi) = \mathit{red}(\phi) \wedge \mathit{red}( \psi)\\
5. &  \mathit{red}([\alpha]\phi) = 
[\alpha ] \mathit{red}(\phi) \text{ with }
\alpha \in \{ 
\lup   , \ldown , \lright , \lleft \}\\
6. &  \mathit{red}([ \compseq{\alpha}{\alpha'} ]\phi) = 
[\alpha ][\alpha' ] \mathit{red}(\phi) \\
7. &  \mathit{red}([ \choice{\alpha}{\alpha'} ]\phi) = 
([\alpha ]\mathit{red}(\phi) \wedge [\alpha' ] \mathit{red}(\phi)) \\
8. &  \mathit{red}([\test{\varphi} ] \psi) = 
\mathit{red}(\neg (\varphi \wedge \neg \psi) ) \\
9. &  \mathit{red}([\delta]p) = p\\
10. &  \mathit{red}([\delta]\here{i}) = [F_i (\delta) ] \here{i}\\
11. &  \mathit{red}([\delta]\neg \phi) = \mathit{red}(\neg[\delta] \phi)\\
12. &  \mathit{red}([\delta ] ( \phi \wedge \psi)) = \mathit{red}(   [\delta ]  \phi   \wedge   [\delta ]  \psi )\\
13. &  \mathit{red}([\delta ] [ \alpha  ]\phi) = \mathit{red}(   [ \alpha  ] [\delta ] \phi )\ \text{with} \ \alpha \in \{ 
\lup   , \ldown , \lright , \lleft \}\\
14. &  \mathit{red}([ \compseq{\beta}{\beta'} ]\phi) = 
[\beta ][\beta' ] \mathit{red}(\phi) \\
15. &  \mathit{red}([ \choice{\beta}{\beta'} ]\phi) = 
([\beta ]\mathit{red}(\phi) \wedge [\beta' ] \mathit{red}(\phi)) \\
   \end{align*}
 
}

We can state the following proposition.

  \begin{proposition}
  Let $\varphi \in \lang(\PROP,\AGT)$.
  Then, $\varphi \leftrightarrow \mathit{red}(\phi)$
  is valid.
  \end{proposition}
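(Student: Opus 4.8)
The plan is to read $\mathit{red}$ as the exhaustive left-to-right application of the equivalences of Proposition~\ref{redaxioms}, each applied inside an arbitrary subformula context, and then to prove $\models\varphi\leftrightarrow\mathit{red}(\varphi)$ by induction on a suitable complexity measure $c(\varphi)$. The correctness of a single step is immediate: each defining clause $6,\dots,15$ matches $\mathit{red}(\varphi)$ with the right-hand side of the corresponding equivalence among $(1)$--$(10)$ of Proposition~\ref{redaxioms} (clause $8$ additionally invoking the propositional equivalence $(\theta\rightarrow\psi)\leftrightarrow\neg(\theta\wedge\neg\psi)$), and all of these are valid. The rule of replacement of equivalents then propagates each equivalence through the surrounding context, and transitivity of $\leftrightarrow$ assembles the finitely many steps into $\models\varphi\leftrightarrow\mathit{red}(\varphi)$.

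In the inductive step I would separate the clauses whose recursive call lands on a proper subformula from those whose call lands on a transformed formula. Clauses $1$--$7$ and $14$--$15$ recurse only on a strict subformula $\phi$; there the biconditional follows from the induction hypothesis $\models\phi\leftrightarrow\mathit{red}(\phi)$ and congruence (replacement applied under $\neg$, $\wedge$, $[\alpha]$ and $[\beta]$), combined with the relevant equivalence from Proposition~\ref{redaxioms}. Clauses $1$--$2$ and $9$--$10$, which output $p$, $\here{i}$ or $[F_i(\delta)]\here{i}$ outright, need only the validity of $(6)$--$(7)$. The delicate clauses are $8$, $11$, $12$ and $13$, where $\mathit{red}$ is applied to a formula obtained by eliminating a test or by pushing the motion box $[\delta]$ one step inward; for these I need $c$ to guarantee that the argument of the recursive call is strictly smaller, so that the induction hypothesis is available.

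The whole difficulty therefore concentrates in choosing $c$. A naive syntactic-size measure fails, because clause $12$ distributes $[\delta]$ over a conjunction (duplicating it), clauses $7$ and $15$ distribute a box over a choice, and clause $13$ merely permutes $[\delta]$ with a spatial box without changing size. I would take $c$ to be additive on the Boolean connectives and on the atomic spatial modalities, with $c(\neg\phi)=c(\phi)+1$, $c(\phi\wedge\psi)=c(\phi)+c(\psi)+1$ and $c([\alpha]\phi)=c(\phi)+1$ for $\alpha\in\{\lup,\ldown,\lright,\lleft\}$, with the test box carrying an additive slack, say $c([\test\theta]\psi)=c(\theta)+c(\psi)+4$, and each compound-program box weighted as one more than the weight of its one-step expansion, so that clauses $6$, $7$, $14$, $15$ drop by exactly $1$; but \emph{multiplicative} on the motion box, $c([\delta]\phi)=g\cdot c(\phi)$ for a fixed $g\geq2$ taken large enough to also exceed the weights of the bounded right-hand sides of clauses $9$ and $10$. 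Multiplicativity is exactly what forces a strict drop in the remaining cases: clause $11$ gives $g(c(\phi)+1)>g\,c(\phi)+1$, clause $12$ gives $g(c(\phi)+c(\psi)+1)>g\,c(\phi)+g\,c(\psi)+1$, and the commutation clause $13$ gives $g(c(\phi)+1)>g\,c(\phi)+1$, each holding precisely because $g>1$, while clause $8$ drops by the additive slack.

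Verifying these inequalities for all of clauses $8$--$13$, confirming that $c$ is monotone under every context and strictly larger than $c$ of every proper subformula (covering clauses $1$--$7$, $14$--$15$), is the one genuinely technical point, and I expect it to be the main obstacle — though a routine one once the multiplicative weighting of $[\delta]$ is fixed. With $c$ in hand, well-foundedness yields termination of the reduction, and the clause-by-clause argument above closes the induction and gives $\models\varphi\leftrightarrow\mathit{red}(\varphi)$.
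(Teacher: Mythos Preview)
Your proposal is correct and matches the paper's intended argument: the paper states the proposition without proof, having already remarked that the equivalences of Proposition~\ref{redaxioms} together with the rule of replacement of equivalents justify each rewriting step of $\mathit{red}$. Your explicit well-founded measure $c$ (multiplicative under $[\delta]$, additive elsewhere, with slack on tests) is exactly the routine detail the paper leaves implicit; it simultaneously guarantees that $\mathit{red}$ terminates and licenses the induction, and your clause-by-clause verification of the strict drops for clauses $8$, $11$, $12$, $13$ is sound.
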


Decidability of the 
satisfiability problem
of $\logic$
follows straightforwardly from the decidability of the star-free fragment
${\mathcal L}_{\logicminusone}(\PROP,\AGT)$
of $\logicminus$ (Theorem \ref{firstdecid}).
 Indeed, $\mathit{red}$ provides an effective procedure for reducing a formula $\varphi$
 in $\lang(\PROP,\AGT)$ into an equivalent 
 formula $\mathit{red}(\phi)$
 in ${\mathcal L}_{\logicminusone}(\PROP,\AGT)$.

 \begin{theorem}
 The logic $\logic$ is decidable.
 \end{theorem}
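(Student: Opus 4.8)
The plan is to obtain decidability of $\logic$ as a corollary of the reduction $\mathit{red}$ together with the decidability of the star-free fragment. By the proposition preceding the theorem, for every $\varphi\in\lang(\PROP,\AGT)$ the equivalence $\varphi\leftrightarrow\mathit{red}(\varphi)$ is valid, and by construction $\mathit{red}(\varphi)$ contains no motion program, so $\mathit{red}(\varphi)\in{\mathcal L}_{\logicminusone}(\PROP,\AGT)$. Hence $\varphi$ is satisfiable in $\logic$ if and only if $\mathit{red}(\varphi)$ is satisfiable in $\logicminusone$, and since satisfiability of ${\mathcal L}_{\logicminusone}(\PROP,\AGT)$-formulas is decidable by Theorem~\ref{firstdecid}, the decision procedure is simply: given $\varphi$, compute $\mathit{red}(\varphi)$ and run the procedure of Theorem~\ref{firstdecid} on it. The only point that genuinely needs checking is that $\mathit{red}$ is an \emph{effective} procedure, i.e. that the recursion defined by clauses $1$--$15$ terminates on every input.

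To prove termination I would exhibit a well-founded complexity measure $c$ on $\lang(\PROP,\AGT)$-formulas that strictly decreases under each left-to-right application of an equivalence of Proposition~\ref{redaxioms}, and verify this by case analysis on clauses $1$--$15$. The measure must combine two components, because two distinct kinds of clauses are in play. The clauses that decompose a motion program — sequential composition (clause $14$), nondeterministic choice (clause $15$) and test (clause $8$) — require a component recording the structural size of the motion programs occurring in $\varphi$: replacing $[\compseq{\beta}{\beta'}]$ by $[\beta][\beta']$ and $[\choice{\beta}{\beta'}]$ by a conjunction of $[\beta]$ and $[\beta']$ strictly lowers this component. The clauses $9$--$13$ acting on an atomic joint action $[\delta]$ require a second component that counts the number of logical connectives and spatial modalities lying \emph{within the scope of some motion modality}; this is what decreases when $[\delta]$ is pushed inward or eliminated.

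The step I expect to be the main obstacle is the verification for clauses $11$, $12$ and $13$, since none of them reduces the raw size of the formula: clause $11$ merely moves $\neg$ past $[\delta]$, clause $13$ permutes $[\delta]$ with a spatial modality $[\alpha]$ (with $\alpha\in\{\lup,\ldown,\lright,\lleft\}$), and clause $12$ even duplicates $[\delta]$ and thereby enlarges the formula. The resolution is that the relevant component counts only operator occurrences sitting inside motion scope: pushing $[\delta]$ past $\neg$ or past $[\alpha]$ removes exactly one in-scope symbol, while distributing $[\delta]$ over $\wedge$ moves the conjunction symbol out of motion scope — the duplication of $[\delta]$ is harmless, since the operators of the two conjuncts were already each counted once under the original outer $[\delta]$ and remain counted once each under their respective copies. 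Clauses $9$ and $10$ eliminate a motion modality outright, as $F_i(\delta)$ is a spatial modality or a test rather than a motion program. A routine check then shows $c$ strictly decreases at every rewriting step, so $\mathit{red}$ is total and computable, which completes the argument. I would finally note that, because clause $12$ duplicates subformulas and clauses $14$--$15$ unfold composite programs, this yields decidability but no useful complexity bound, exactly as already observed for the reduction of $\logicminusone$ to $\logicminuszero$.
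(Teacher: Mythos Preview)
Your proposal is correct and follows exactly the paper's approach: reduce a $\lang(\PROP,\AGT)$-formula to an equivalent ${\mathcal L}_{\logicminusone}(\PROP,\AGT)$-formula via $\mathit{red}$, then invoke Theorem~\ref{firstdecid}. In fact you go further than the paper, which simply asserts that $\mathit{red}$ is an effective procedure without justifying termination; your sketch of a two-component well-founded measure (motion-program size, plus the count of connectives and spatial modalities lying in motion scope) is the standard way to make such $\dellogic$-style reduction arguments rigorous, and your identification of clauses $11$--$13$ as the delicate cases is accurate.
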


Thanks to
the completeness result
for the  star-free fragment
 of
 $\logic$ and the reduction axioms
 of Proposition \ref{redaxioms},
 we can state the following theorem.

 \begin{theorem}
 The logic $\logic$ is completely axiomatized
 by the axioms
 and rules of inference
 of the star-free fragment
 of
 $\logic$  given in Section \ref{starfree}, 
 the valid formulas
 of Proposition \ref{redaxioms} and the rule of replacement of equivalents. 
 \end{theorem}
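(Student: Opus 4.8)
The plan is to run the standard reduction-style completeness argument, exactly as for public announcement logic and dynamic epistemic logic~\cite{DELbook}. Soundness---every derivable formula is valid---is routine: the star-free axioms of Section~\ref{starfree} are valid by the proof of Theorem~\ref{completeness:D:L:S:0}, the equivalences of Proposition~\ref{redaxioms} are valid by that proposition, and modus ponens together with the rule of replacement of equivalents preserve validity, so an induction on the length of derivations handles this direction. All the content is in the converse implication, namely that every valid $\lang(\PROP,\AGT)$-formula is derivable.

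The crux is to upgrade the preceding proposition from validity to \emph{derivability}: I would show that $\vdash \varphi \leftrightarrow \mathit{red}(\varphi)$ for every $\varphi\in\lang(\PROP,\AGT)$. This is proved by induction following the recursive definition of $\mathit{red}$. Each defining clause rewrites a formula using, from left to right, one of the equivalences of Proposition~\ref{redaxioms}; feeding the inductively obtained equivalences for the immediate subformulas into the rule of replacement of equivalents then yields the derivable equivalence for the whole formula. The delicate point is termination, since $\parallel\varphi\parallel$ does not decrease under every clause: clause~12 duplicates $[\delta]$ over the conjuncts, while clauses~11 and~13 merely commute $[\delta]$ past $\neg$ and past a static $[\alpha]$. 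I would therefore measure complexity with a weight that grows super-additively (in fact exponentially) in the scope of each motion modality, so that pushing a $[\delta]$ one level inward---and even duplicating it over a conjunction---strictly decreases the measure; this is the familiar measure used for reduction axioms in dynamic epistemic logic. Termination of $\mathit{red}$ together with inspection of the clauses shows that $\mathit{red}(\varphi)$ contains no motion modality, i.e.\ $\mathit{red}(\varphi)\in{\mathcal L}_{\logicminusone}(\PROP,\AGT)$.

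The rest is short. Let $\varphi\in\lang(\PROP,\AGT)$ be valid. From $\vdash \varphi\leftrightarrow\mathit{red}(\varphi)$ and soundness we get $\models \varphi\leftrightarrow\mathit{red}(\varphi)$, hence $\mathit{red}(\varphi)$ is valid. As $\mathit{red}(\varphi)$ is star-free, I invoke completeness of the star-free fragment: using the derivable equivalences for $\compseq{\alpha}{\alpha'}$, $\choice{\alpha}{\alpha'}$ and $\test{\varphi}$ from Proposition~\ref{redaxioms} under replacement of equivalents, $\mathit{red}(\varphi)$ is equivalent to an ${\mathcal L}^0_{\logicminusone}(\PROP,\AGT)$-formula, which Theorem~\ref{completeness:D:L:S:0} makes derivable; so $\vdash\mathit{red}(\varphi)$. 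Combining $\vdash\mathit{red}(\varphi)$ with $\vdash\varphi\leftrightarrow\mathit{red}(\varphi)$ by propositional reasoning and modus ponens gives $\vdash\varphi$, as required. The main obstacle is precisely the derivability-of-reduction step of the second paragraph: one must exhibit a single well-founded measure under which all fifteen clauses of $\mathit{red}$---especially the size-preserving and the duplicating ones---are strictly decreasing, and then verify that each rewriting is licensed by an axiom of Proposition~\ref{redaxioms} through the rule of replacement of equivalents. Everything else is bookkeeping.
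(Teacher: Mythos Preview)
Your proposal is correct and is precisely the standard reduction-axiom argument the paper has in mind; indeed the paper gives no explicit proof of this theorem, only the remark that it follows from completeness of the star-free fragment together with Proposition~\ref{redaxioms}, and you have faithfully unfolded that remark. The one point you add beyond the paper---the need for a well-founded termination measure for $\mathit{red}$ and the further reduction from ${\mathcal L}_{\logicminusone}$ to ${\mathcal L}^0_{\logicminusone}$ before invoking Theorem~\ref{completeness:D:L:S:0}---is exactly the bookkeeping the paper leaves implicit.
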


\section{Perspectives }\label{sec:perspectives}

\newcommand{\clop}[1]{  \langle\![  #1 ]\!\rangle }

Before%\footnote{PHILIPPE: THIS SECTION SHOULD BE CHECKED BY US BEFORE TOMORROW 12:00.} concluding the paper,
we discuss two  perspectives
for the extension of the logic
$\logicminus$
and 
$\logic$
by concepts
of perceptual knowledge
and  coalitional capability.

\paragraph{Perceptual knowledge} $\logicminus$
and
$\logic$ support reasoning about
properties
of the 2D space as well as about
positions and motion
of agents in the 2D space. 
However, an agent in the space
does not only move but also
sees where other agents are,
how the space around her is,
what other agents do, etc.
More generally,
agents in the space have perceptual
knowledge (i.e., knowledge based on what they see).
We want to
propose here a simple extension 
of  $\logicminus$
and
$\logic$ 
by modal operators of perceptual 
knowledge.
Specifically, 
we consider epistemic-like
operators of type $\see{ i }{ k } $
  describing 
what an agent
could 
see 
from her current position,
if she had a range of vision
of size
  $k \in \mathbb{N}$.
  An agent's range of vision
  of size  
  $k  $
corresponds to the square centered at the agent's position
with side length equal to $2\times k$. 
We call the latter
agent $i$'s \emph{neighborhood} of size $k$.

%operators of the form
%$\see{ i }{ k } $,
%where $i$
%ranges over the set of agents $\Agt$
%and $\alpha$
%ranges over
%$\see{ i }{ \alpha } \varphi $
%has to be read
%``if agent $i$'s range of vision is determined by the program
%$\alpha$,
%then $i$
%sees that $\varphi$
%is true''.

%The first one concerns in the extension
%of the logic
%$\logicminus$
%by
%
%
%
%
%This section  presents
%a third logic called
%$\logicplus$ (Dynamic Logic of Space,
%Moving and Seeing).
%$\logicplus$
%extends
%$\logic$
%by S5 epistemic-like
%modal
%operators
%that
%allow us
%to describe
%what an agent
%sees in the space
%from her current position
%and depending of her range of vision.
%
%
%\subsection{Syntax }

%The language
%$\langplus$
%of
%$\logicplus$
%extends the language
%$\lang$
%by
%epistemic-like
%operators of the form
%$\see{ i }{ k } $,
%where $i$
%ranges over the set of agents $\Agt$
%and $\alpha$
%ranges over $\mathbb{N}$.
%$\see{ i }{ \alpha } \varphi $
%has to be read
%``if agent $i$'s range of vision is determined by the program
%$\alpha$,
%then $i$
%sees that $\varphi$
%is true''.
%
%
%For example,
%according to the
%the program
%$\alpha = \choice{\choice{\lup}{\ldown}}{\choice{\lleft}{\lright}}$,
% agent $i$ can see everything at distance 1
% from her current position but not in diagonal\footnote{Philippe: Seeing that the end of the diagonal is at distance $\sqrt{2}$, is it needed to talk about the diagonal?}.
%
%
%\subsection{Semantics }

In order to provide an interpretation of the operator
$\see{ i }{ k }  $,
 the following concept
of indistinguishibility is required.
Let $i \in \Agt$
and
let $M = (
  \posfunct, \valprop  )$ and $ M' = (
  \posfunct', \valprop'  )  $ be two spatial models. We say that
  $M$
  and
 $M'$
  are \emph{indistinguishable}
  for
  agent $i$
  given
  her current position
 and her range of vision
of size
  $k$, denoted
  by $M \sim_i^k M'$,
  if and only if:
\begin{center}\begin{tabular}{lll}
$\valprop'( x,y ) $& $=$ &
 $\valprop( x,y ) $  \\
$\posfunct'(j)  $& $=$ &
 $\posfunct(j) $
\end{tabular}\end{center}
  for all $(x,y)
  \in \natset \times
  \natset $
  and for all   $j \in \Agt$
  such that
  $(x,y) \in \mathcal{D}(i,k)$
  and   $ \posfunct(j) \in \mathcal{D}(i,k)$
  with
  \begin{align*}
  \mathcal{D}(i,k) = & \{ (x',y')  : \posfunct_x(j) - k \leq x' \leq  \posfunct_x(j) + k \text{ and } \\
& \posfunct_y(j) - k \leq y' \leq  \posfunct_y(j) + k  \}
  \end{align*}
where $\posfunct_x(j)$
and $\posfunct_y(j)$
 are, respectively, the $x$-coordinate
 and the $y$-coordinate in $\posfunct(j)$.

This notion of indistinguishibility
is essential
to provide a truth condition of the
 formula
$\see{ i }{ k } \varphi $
that has to be read
``if agent $i$ had a range of vision 
 of size
  $k$,
then $i$ could see that $\varphi$
is true from her current position''.
Let $M$
be a spatial model
and let
$(x,y) \in\natset \times
  \natset $. Then:
\begin{eqnarray*}
M, (x,y) \models \see{ i }{ k  }
\phi & \Longleftrightarrow &
\forall M' \in
\mathbf{M}: \text{if }
  M \sim_i^k M'\\
&&   \text{then }
 M', \posfunct(i) \models \varphi
\end{eqnarray*}

It is easy to check that $\sim_i^k$
is an equivalence relation. Thus, the operator $ \see{ i }{ k  }$
satisfies all S5 principles.%\footnote{PHILIPPE: TO BE CHECKED.}
$ \see{ i }{ k  }$ satisfies additional principles that are
proper to its spatial interpretation.
For instance,
let
\begin{align*}
  \mathcal{P}rg(k) =&  \{   \lleft^{h   } \lup^h : 0 \leq h \leq k  \}  \cup
   \{ \lright^{h   } \lup^h: 0 \leq h \leq k  \}  \cup\\
   & \{ \lleft^{h   } \ldown^h: 0 \leq h \leq k  \}  \cup
     \{ \lright^{h   } \ldown^h: 0 \leq h \leq k  \}  
\end{align*}
be  the set of  spatial programs that 
allow to reach \emph{all and only} those points
in an agent's neighborhood of size $k$.
Then, under the previous interpretation
of the operator $\see{ i }{ k  }$, the following formulas become valid
for every $\alpha \in  \mathcal{P}rg(k) $:
\begin{align}
 \here{i}  \rightarrow  ([\alpha ] p \leftrightarrow  \see{ i }{ k  } [\alpha ] p )\\
  \here{i}  \rightarrow  ([\alpha ] \here{j} \leftrightarrow  \see{ i }{ k  } [\alpha ] \here{j} )
\end{align}
This means that if an
agent $i$ has a range of vision of size
  $k$, then
she can perceive
  all facts that are true 
  and all agents that are positioned
    in her neighborhood of size $k$.
    The following formula
    is an example of instance of the previous validity:
    \begin{align}
 \here{i}  \rightarrow  ([\lup]  \here{j} \leftrightarrow  \see{ i }{ 1  } [\lup ] \here{j} )
\end{align}
The latter means that if 
agent $i$ has a range of vision of size
  $1$ then,
  agent $j$ is above her iff 
agent $i$  perceives this.

We postpone to future work
a study of the complexities 
of  model-checking
and of decidability 
for the extensions
by epistemic operators $\see{ i }{ k  }$
of
the different
logics
presented in the paper.

\paragraph{Coalitional capability}

%Another perspective for future research
%concerns the dynamic
%logic
%of space and movement $\logic$
%presented in Section \ref{SpaceMovement}.
$\logic$ provides an interesting basis
for
the development 
of a logic
of coalitional capabilities in the two-dimensional space.
We take the concept
of `coalitional capability' 
in the sense
of Coalition Logic  $\cllogic$ \cite{DBLP:journals/logcom/Pauly02}.
Specifically, we say that
coalition $C$ has the capability
of ensuring $\varphi$, denoted by $\clop{C}\varphi$,
if and only if
``there exists
a joint action $\delta_C$
of coalition $C$
such that, by performing it,
 outcome $\varphi$ will be ensured,
no matter what the agents
outside $C$
decide to do''. 
The extension
of $\logic$
by coalitional capability
operators 
 $\clop{C}$
 is rather simple,
 as the agents' action repertoires
 only includes
  the four basic movements in the plane ($\gup{i }, \gdown{i },
\gleft{i }$ and $ \gright{i }$)
and the action of doing nothing ($\skipact{i }$).

 Following  Section \ref{sec:logic2},
 for every coalition $C \subseteq \AGT$
 we  define
its
set of joint of actions $\Delta_C =  \prod_{i \in C}  \ACT_i  $
and denote
 elements of $\Delta_C$ by $\delta_C, \delta_C', \ldots$
 Then, 
 the 
 truth condition of the operator
 $\clop{C}$ goes as follows: $M, (x,y) \models \clop{C}\varphi  $ if and only if $
\exists \delta_C \in \Delta_C$ such that
\[\forall \delta_{\AGT \setminus C }' \in \Delta_{\AGT \setminus C } :
 M, (x,y) \models [\delta_C,\delta_{\AGT \setminus C }' ] \varphi 
.\]

Since $\delta_C$
and $\delta_{\AGT \setminus C }'$
are finite, 
$ \clop{C}\varphi$
is expressible in $\logic$  but at the price
of an exponential blowup
in the size of the formula $ \varphi$. 

It is easy to check that 
the operator
$\clop{C}$
satisfies the following
basic principles
of the coalitional capability
operator by
\cite{DBLP:journals/logcom/Pauly02}:
    \begin{align}
\neg \clop{C} \bot \\
 \clop{C} \top \\
 \neg \clop{\emptyset} \neg \varphi 
 \rightarrow \clop{\AGT} \varphi  \\
 \clop{C}(\varphi \wedge \psi)
 \rightarrow \clop{C}\varphi \\
  (\clop{C_1}\varphi \wedge
  \clop{C_2}\psi)\rightarrow
  \clop{C_1 \cup C_2}
  (\varphi \wedge \psi) \notag \\ 
  \text{ if }
  C_1 \cap C_2 = \emptyset\\
  \frac{\varphi \leftrightarrow \psi}{ \clop{C} \varphi \leftrightarrow  \clop{C} \psi}
\end{align} 
$\clop{C}$ satisfies additional principles
that are proper to its spatial interpretation. 
For instance, it is easy to check that, under
the previous interpretation,
the following two formulas
become valid: 
    \begin{align}
\neg \clop{ C  } \here{i} \text{ if } i \not \in C \\
([\lup] \here{i} \vee [\lright]\here{i} \vee [\ldown]\here{i} \vee 
[\lleft]\here{i})
\rightarrow \clop{  \{ i  \} } \here{i} 
\end{align}
The two validities captures
the basic idea
that an agent has exclusive control
of her position in the sense that:
(i) if  coalition $C$ does not include
agent $i$
then $C$ cannot force $i$ to be ``here'', and
(ii) agent $i$ has the capability to move ``here''
if she is ``around''.

We postpone to future work a more systematic analysis
of 
the basic principles
of the operator $\clop{C}$
as well as 
a
study of 
a strategic capability
operator
in the sense of 
$\atllogic$
\cite{DBLP:journals/jacm/AlurHK02}
based on the semantics
of the logic $\logic$.

%\newpage


\begin{thebibliography}{10}

\bibitem{DBLP:journals/jacm/AlurHK02}
R.~Alur, T.~A. Henzinger, and O.~Kupferman.
\newblock Alternating-time temporal logic.
\newblock {\em Journal of the {ACM}}, 49(5):672--713, 2002.

\bibitem{DBLP:conf/atal/AminofMRZ16}
B.~Aminof, A.~Murano, S.~Rubin, and F.~Zuleger.
\newblock Automatic verification of multi-agent systems in parameterised
  grid-environments.
\newblock In {\em Proceedings of the International Conference on Autonomous
  Agents {\&} Multiagent Systems (AAMAS 2016)}, pages 1190--1199. {ACM}, 2016.

\bibitem{BalbianiFlatland}
P.~Balbiani, O.~Gasquet, and F.~Schwarzentruber.
\newblock {Agents that look at one another}.
\newblock {\em Logic Journal of the IGPL, :Combined Special Issue: Best papers
  of FAMAS 2007 and FAMAS 2009}, 21(3):438--467, juin 2013.

\bibitem{DBLP:journals/sLogica/BelleL16}
Vaishak Belle and Hector~J. Levesque.
\newblock A logical theory of localization.
\newblock {\em Studia Logica}, 104(4):741--772, 2016.

\bibitem{belnap01facing}
N.~Belnap, M.~Perloff, and M.~Xu.
\newblock {\em Facing the future: agents and choices in our indeterminist
  world}.
\newblock Oxford University Press, New York, 2001.

\bibitem{model:checking:CTL}
E.~Clarke, E.~Emerson, and A.~Sistla.
\newblock Automatic verification of finite-state concurrent systems using
  temporal logic specifications.
\newblock {\em ACM Transactions on Programming Languages and Systems}, 8:244,
  1986.

\bibitem{Fagin1995}
R.~Fagin, J.~Y. Halpern, Y.~Moses, and M.~Vardi.
\newblock {\em Reasoning about knowledge}.
\newblock MIT Press, Cambridge, 1995.

\bibitem{Gabbay:Kurucz:Wolter:Zakharyaschev:2003}
D.~Gabbay, A.~Kurucz, F.~Wolter, and M.~Zakharyaschev.
\newblock {\em Many-Dimensional Modal Logics: Theory and Applications}.
\newblock Elsevier, 2003.

\bibitem{GasquetBigBrother}
O.~Gasquet, V.~Goranko, and F.~Schwarzentruber.
\newblock Big brother logic: logical modeling and reasoning about agents
  equipped with surveillance cameras in the plane.
\newblock In {\em Proceedings of the International conference on Autonomous
  Agents and Multi-Agent Systems (AAMAS 2014)}, pages 325--332. ACM Press,
  2014.

\bibitem{Marx97Compass}
Maarten Marx and Mark Reynolds.
\newblock Undecidability of compass logic.
\newblock {\em Journal of Logic and Computation}, 9:897--941, 1997.

\bibitem{MarxVenema1997}
Maarten Marx and Yde Venema.
\newblock {\em Multi-Dimensional Modal Logic}, volume~4 of {\em Applied Logic
  Series}.
\newblock Springer Netherlands, first edition, 1997.

\bibitem{DBLP:journals/logcom/Pauly02}
M.~Pauly.
\newblock A modal logic for coalitional power in games.
\newblock {\em Journal of Logic and Computation}, 12(1):149--166, 2002.

\bibitem{Reynolds:Zakharyaschev:2001}
Zakharyaschev~M. Reynolds, M.
\newblock On the products of linear modal logics.
\newblock {\em Journal of Logic and Compution}, 11:909--931, 2001.

\bibitem{DBLP:conf/prima/RubinZMA15}
S.~Rubin, F.~Zuleger, A.~Murano, and B.~Aminof.
\newblock Verification of asynchronous mobile-robots in partially-known
  environments.
\newblock In {\em Proceedings of the 18th International Conference on
  Principles and Practice of Multi-Agent Systems (PRIMA 2015)}, volume 9387 of
  {\em LNCS}, pages 185--200. Springer, 2015.

\bibitem{DBLP:journals/sLogica/SchmidtTH04}
R.~A. Schmidt, D.~Tishkovsky, and U.~Hustadt.
\newblock Interactions between knowledge, action and commitment within agent
  dynamic logic.
\newblock {\em Studia Logica}, 78(3):381--415, 2004.

\bibitem{DELbook}
H.~van Ditmarsch, W.~van~der Hoek, and B.~Kooi.
\newblock {\em Dynamic Epistemic Logic}.
\newblock Springer, 2008.

\end{thebibliography}
\end{document}